\documentclass[runningheads,a4paper]{llncs}

\usepackage[usenames]{color}
 
\setcounter{tocdepth}{3}
\usepackage{graphicx}
\usepackage[all, knot]{xy}
\setcounter{tocdepth}{3}
\usepackage{amsmath}
\usepackage{mathabx}

\usepackage{comment}
\usepackage{times}
 

\usepackage{url}


\begin{document}

\mainmatter  

\title{Representation, Justification and Explanation in a Value Driven Agent: An Argumentation-Based Approach} 

\titlerunning{Representation, Justification and Explanation in a Value Driven Agent}

%
%
\author{Beishui Liao\inst{1} \and Michael Anderson\inst{2} \and Susan Leigh Anderson\inst{3}} 
\authorrunning{B. Liao, M. Anderson and S. L. Anderson} 

\institute{ 
Zhejiang University, Hangzhou 310028, P.R. China
\and
University of Hartford, Hartford, CT, USA
\and 
University of Connecticut, Storrs, CT, USA
}

%

 

%
%

\toctitle{Lecture Notes in Computer Science}
\tocauthor{Authors' Instructions}
\maketitle

\begin{abstract}
Ethical and explainable artificial intelligence is an interdisciplinary research area involving computer science, philosophy, logic, the social sciences, etc. For an ethical autonomous system, the ability to justify and explain its decision making is a crucial aspect of transparency and trustworthiness. This paper takes a Value Driven Agent (VDA) as an example, explicitly representing implicit knowledge of a machine learning-based autonomous agent and using this formalism to justify and explain the decisions of the agent. For this purpose, we introduce a novel formalism to describe the intrinsic knowledge and solutions of a VDA in each situation. Based on this formalism, we formulate an approach to justify and explain the decision-making process of a VDA, in terms of a typical argumentation formalism, Assumption-based Argumentation (ABA). As a result, a VDA in a given situation is mapped onto an argumentation framework in which arguments are defined by the notion of deduction. Justified actions with respect to semantics from argumentation correspond to solutions of the VDA. The acceptance (rejection) of arguments and their premises in the framework provides an explanation for why an action was selected (or not). Furthermore, we go beyond the existing version of VDA, considering not only practical reasoning, but also epistemic reasoning, such that the inconsistency of knowledge of the VDA can be identified, handled and explained.    
\end{abstract}





\section{Introduction}
Ethical, explainable artificial intelligence is an increasingly active research area in recent years.  An autonomous agent should make decisions by determining ethically preferable behavior \cite{DBLP:conf/aies/LiaoST19, DBLP:journals/pieee/Anderson18}. Further, it is expected to provide explanations to human beings about how and why the decisions are made \cite{DBLP:conf/isaim/BaumHS18}. Explainable AI is an interdisciplinary research direction, involving computer science, philosophy, cognitive psychology/science, and social psychology \cite{DBLP:journals/ai/Miller19}. In recent years, different approaches have been proposed to provide explanations for autonomous systems, though most of them are still rather preliminary. For instance, \cite{Cocarascu2018} proposed an architecture combining Artificial Neural Networks and argumentation for solving binary classification problems, \cite{DBLP:conf/ijcai/ShihCD18} introduced an approach for explaining Bayesian network classifiers such that the classifiers are compiled into decision functions that have a tractable and symbolic form,  and \cite{DBLP:journals/corr/abs-1806-08055} proposed a human explanation model based on conversational data. 

While there are various types of explanations such as trace, justification and strategy, according to the empirical results reported in \cite{DBLP:journals/misq/YeJ95}, justification is the most effective type of explanation to bring about changes in user attitudes toward the system. In order to provide a justification, one needs to first have a formal representation of the knowledge that is used in the process of decision making of an autonomous agent. For a machine learning-based agent, unfortunately, the formal and logical knowledge may not always be self-evident. In such cases modeling justification-based explanation entails formally representing the intrinsic knowledge of the agent to permit its use for justification and explanation, as exemplified in \cite{Cocarascu2018} and \cite{DBLP:conf/ijcai/ShihCD18}. In this paper, we study this methodology by considering an ethical agent, called a Value Driven Agent (VDA) in \cite{DBLP:conf/aaai/AndersonAB17}, focusing on the following research question. 

\begin{description}
\item[Research question] \emph{How can we explicitly represent the implicit knowledge of a VDA and use it to provide formal justification and explanation for its decision-making?}
\end{description}

A VDA, as introduced in next section, uses inductive logic programming techniques to abstract a principle from a set of cases, and a decision tree to determine the ethical consequences of each action in the current situation.  Interestingly, there exists untapped implicit knowledge that can help provide an account as to why a VDA determines an action is considered ethically preferable. 

In existing literature, there are a number works on value-based practical reasoning, e.g. \cite{DBLP:journals/ail/Bench-CaponM17} and \cite{DBLP:conf/icail/Bench-CaponM19}, which are related to the ethical decision-making of a VDA. However, the existing work has not considered how preferences over actions can be induced from cases, nor how a logic-based formalism can be integrated with a machine leaning based approach. 

In addition, concerning the combination of machine leaning-based approaches and formal logic-based approaches, while some existing works are mainly for explaining classification, e.g. \cite{Cocarascu2018} and \cite{DBLP:conf/ijcai/ShihCD18}, we are more interested in a system that can reason about the state of the world, the actions of a VDA, and the ethical consequences of the actions.

With these ideas in mind, in this paper, we study an \textit{explainable} VDA by exploiting formal argumentation. 
The structure of this paper is as follows. Section 2 recalls some required basic notions in existing literature. Section 3 introduces a formalism for representing knowledge of a value driven agent. In Section 4, we present an argumentation-based justification and explanation approach. In section 5, we discuss related work. Finally, we offer our conclusions in Section 6. 

\section{Preliminaries}

In this section, we introduce some basic notions required for understanding what follows, including those about Value Driven Agents and formal argumentation. 

\subsection{Value Driven Agent}

According to \cite{DBLP:conf/aaai/AndersonAB17}, A VDA is defined as an autonomous agent that decides its next action using an ethical preference relation over actions, termed a \textit{principle}, that is abstracted from a set of cases using inductive logic programming techniques. A \textit{case-supported, principle based approach} (CPB) uses a representation scheme that includes \textit{ethically relevant features} (e.g. harm, good, etc.) and their incumbent \textit{prima facie} duties to either minimize or maximize them (e.g. minimize harm, maximize good), \textit{actions} characterized by integer degrees of presence or absence of ethically relevant features (and so, indirectly, the duties it satisfies or violates), and \textit{cases} comprised of the differences of the corresponding duty satisfaction/violation degrees of two possible actions where one is ethically preferable to the other.

A principle of ethical preference is defined as a disjunctive normal form predicate in terms of lower bounds for duty differentials of a case. 
\begin{align*}
p(a_1,a_2) &\xleftarrow[]{}\\
&\Delta d_1>=v_{1,1} \wedge\dots\wedge \Delta d_n>=v_{n,1}      \\
&\vee\\
&\vdots\\
&\vee\\
&\Delta d_1>=v_{1,m} \wedge\dots\wedge \Delta d_n>=v_{n,m}  
\end{align*}
where $\Delta d_i$ denotes the difference of a corresponding values of duty \textit{i} in actions$\ a_1$ and$\ a_2$ (the actions of the case in question) and$\ v_{i,j}$ denotes the lower bound of duty \textit{i} in disjunct \textit{j} such that$\ p(a_1,a_2)$ returns \textit{true} if action$\ a_1$  is ethically preferable to action$\ a_2$.

Inductive logic programming (ILP) techniques are used to abstract principles from judgments of ethicists on specific two-action cases where a consensus exists as to the ethically relevant features involved, the relative levels of satisfaction or violation of their correlative duties, and the action that is considered ethically preferable. These techniques result in a set of sets of lower bounds for which principle$\ p$ will return \textit{true} for all positive cases presented to it (i.e. where the first action is ethically preferable to the second) and \textit{false} for all negative cases (i.e. where the first action is \textit{not} ethically preferable to the second). That is, for every positive case, there is a clause of the principle that is true for the differential of the actions of the case and, for every negative case, no clause of the principle returns true for the differential of the actions of the case.  The principle is thus complete and consistent with respect to its training cases.  Further, as each set of lower bounds is a \textit{specialization} of the set of minimal lower bounds sufficient to uncover negative cases, each clause of the principle may inductively cover positive cases other than those used in its training.

A general ethical dilemma analyzer, GenEth \cite{DBLP:journals/paladyn/AndersonA18}, has been developed that, through a dialog with ethicists, helps codify ethical principles in any domain. GenEth uses ILP \cite{Džeroski2001} to infer a principle of ethical action preference from cases that is complete and consistent in relation to these cases. 
As cases are presented to the system, duties and ranges of satisfaction/violation values are determined in GenEth through resolution of contradictions that arise, constructing a concrete representation language that makes explicit features, their possible degrees of presence or absence, duties to maximize or minimize them, and their possible degrees of satisfaction or violation.
Ethical preference is determined from differences of satisfaction/violation values of the corresponding duties of two actions of a case. GenEth abstracts a principle of ethical preference $p(a_1,a_2)$ by incrementally raising selected lower bounds (all initially set at their lowest possible value) so that this principle no longer returns true for any negative cases (cases in which $a_2$ is preferable to $a_1$) while still returning true for all positive cases (cases in which $a_1$ is preferable to $a_2$).

To use this principle to determine a VDA's next action, it is necessary to associate each of the VDA's possible actions with a vector of values representing levels of satisfaction or violation of duties that that action exhibits in the current context. The current context is represented as a set of Boolean perceptions whose values are determined from initial input combined with sensor data (such as the fact that it is time to remind a patient that it is time to take a medication or batteries are in need of recharging). These values are provided as input to a decision tree (abstracted from input/output examples provided by the project ethicist) whose output is the duty satisfaction/violation values appropriate for each action given the context defined by the current Boolean perceptions. Given this information, the principle can serve as a comparison function for a sorting routine that orders actions by ethical preference.

The decision making process of a VDA, then, is as follows:  sense the state of the world and abstract it into a set of Boolean perceptions, determine the vectors of duty satisfaction or violation of all actions with respect to this state using the decision tree, and sort the actions in order of ethical preference using the principle such that the first action in the sorted list is the most ethically preferable one.  Clearly, 
%
several kinds of knowledge 
of a VDA are implicit, including the relation between perceptions and actions determined by the decision tree, the ethical consequences of an action (represented by a vector of duty satisfaction or violation values of the action),  disjuncts in the clauses of the principle that are used to order two actions, and the cases from which these disjuncts are abstracted. Since these kinds of knowledge are informal and somewhat implicit, the current version VDA cannot provide explanations about why an action is taken. 

Our current implementation is in the domain of eldercare where a robot is tasked with assisting an elderly person. Its possible actions include:
charge the robot's battery if low until sufficiently charged;
remind the patient that it's time to take a medication according to a doctor's orders, retrieve that medication and bring it to the patient;
engage the patient if the patient has been immobile for a certain period of time;
warn the patient that an overseer will be notified if the patient refuses medication or does not respond to the robot's attempt to engage the patient;
notify an overseer if there has not been a positive response to a previous warning;
return to a seek task position when no tasks are required. 
For further details, readers are referred to \cite{DBLP:conf/aaai/AndersonAB17}.

\subsection{Formal Argumentation}

Formal argumentation or argumentation in AI, is a formalism for representing and reasoning with inconsistent and incomplete information \cite{hofa}. It also provides various ways for explaining why a claim or a decision is made, in terms of justification, dialogue, and dispute trees \cite{DBLP:conf/comma/CyrasST16}. 

Intuitively, an argumentation system consists of a set of arguments and an attack relation over them. Arguments can be constructed from an underlying knowledge base represented by a logical language, while the attack relation can be defined in terms of the inconsistency of the underlying knowledge.  There are different formalisms for modeling formal argumentation, such as Defeasible Logic Programming (DeLP) \cite{DBLP:journals/tplp/GarciaS04}, APSIC$^+$ \cite{DBLP:journals/argcom/ModgilP14}, Assumption-based Argumentation (ABA) \cite{DBLP:journals/argcom/Toni14}, and Classical Logic-based Argumentation \cite{DBLP:journals/argcom/BesnardH14}. In this paper, the acceptance of an ethical consequence specified by a vector of duty satisfaction/violation can be viewed as an assumption, while the relations between accepting an ethical consequence and an action, and between accepting different ethical consequences with respect to a principle, can be represented by deductive rules. Furthermore, default assumptions in epistemic reasoning can also be represented by deductive rules with assumptions in their premises. Under these considerations, we may adopt ABA as a formalism for representation.
Now, let us first introduce some notions of ABA under the setting of this paper. 

According to \cite{DBLP:journals/argcom/Toni14}, an ABA framework is a tuple $\langle \mathcal{L}, \mathcal{R}, \mathcal{A}, ^{\relbar} \rangle$ where 
\begin{itemize}
\item $\langle \mathcal{L}, \mathcal{R} \rangle$ is a deductive system, with $\mathcal{L}$ the language, and $\mathcal{R}$ a set of rules of the form $\sigma_0 \leftarrow \sigma_1, \dots, \sigma_m$ ($m\ge 0$) with $\sigma_i\in \mathcal{L}$ ($i= 0, \dots, m$);
\item $\mathcal{A} \subseteq \mathcal{L}$ is a (non-empty) set, referred to as assumptions; 
\item $^\relbar$ is a total mapping from  $\mathcal{A}$ into $\mathcal{L}$; $\overline{a}$ is referred to as the contrary of $a$. 
\end{itemize}

Given an ABA framework, arguments can be defined by the notion of \emph{deduction}. In terms of  \cite{DBLP:journals/argcom/Toni14}, a deduction for $\sigma\in \mathcal{L}$ supported by $T\subseteq \mathcal{L}$ and $R\subseteq \mathcal{R}$, denoted $T \vdash^R \sigma$, is a (finite) tree with nodes labelled by sentences in $\mathcal{L}$ or by $\tau$ (when the premise of a rule applied in the tree is empty), the root labelled by $\sigma$, leaves either $\tau$ or sentences in $T$, non-leaves $\sigma^\prime$ with, as children, the elements of the body of some rules in $\mathcal{R}$ with head  $\sigma^\prime$, and $R$ the set of all such rules. When the context is clear, $T \vdash^R \sigma$ is written as $T \vdash \sigma$. Then, an argument for (the claim) $\sigma\in \mathcal{L}$ supported by $A\subseteq \mathcal{A}$ ($A\vdash \sigma$ for short) is a deduction for $\sigma$ supported by $A$ (and some $R\subseteq \mathcal{R}$).

Arguments may attack each other. An argument $T_1 \vdash \sigma_1$ attacks an argument $T_2 \vdash \sigma_2$ if and only if $\sigma_1$ is the contrary of one of the assumptions in $T_2$.

Let $AR$ be a set of arguments constructed from $\langle \mathcal{L}, \mathcal{R}, \mathcal{A}, ^{\relbar} \rangle$, and $ATT \subseteq AR\times AR$ be the attack relation over $AR$. A tuple $(AR, ATT)$ is called an abstract argumentation framework (or AAF in brief). Given an AAF, 
the notion of argumentation semantics in \cite{DBLP:journals/ai/Dung95} can be used to evaluate the status of arguments in $AR$. There are a number of argumentation semantics capturing different intuitions and constraints for evaluating the status of arguments in an AAF, including complete, preferred, grounded and stable, etc. A set of arguments accepted together is called an extension. Various types of extensions under different argumentation semantics can be defined in terms of the notion of admissibility of set of arguments, which is in turn in terms of the notions of conflict-freeness and defense. For $E\subseteq AR$, we say that  $E$ is conflict-free if and only if there exist no $X_1, X_2\in E$  such that $X_1$ attacks $X_2$; $E$ defends an argument $X\in  AR$ if and only if for every argument $Y\in  AR$ if $Y$ attacks $X$ then there exists $Z\in E$ such that $Z$ attacks $Y$. Set $E$ is admissible if and only if it is conflict-free and defends each argument in $E$. Then, we say that:
\begin{itemize}
\item $E$ is a complete extension if and only if $E$ is admissible and each argument in $ AR$ defended by $E$ is in $E$; 
\item $E$ is a preferred extension if and only if $E$ is a maximal complete extension with respect to set-inclusion; 
\item $E$ is the grounded extension if and only if $E$ is a minimal complete extension with respect to set-inclusion; 
\item $E$ is a stable extension if and only if $E$ is conflict-free and for every $X\in AR\setminus E$, there exists $Y\in E$ such that $Y$ attacks $X$. 
\end{itemize}

Given an AAF $(AR, ATT)$, we use $sm(AR, ATT)$ to denote a set of extensions of $(AR, ATT)$ under semantics $sm \in \{\mathrm{Co}, \mathrm{Pr}, \mathrm{Gr}, \mathrm{St}\}$, in which $Co, Pr, Gr$ and $St$ denote complete, preferred, grounded and stable semantics respetively.

It has been verified that each AAF has a unique (possibly empty) set of grounded extension, while many AAFs may have multiple sets of extensions under other semantics. 
 When an AAF is acyclic, it has only one extension under all semantics. 
Then, we say that an argument of an AAF is skeptically justified under a given semantics if it is in every extension of the AAF, and credulously justified if it is in at least one but not all extensions of the AAF. Furthermore, we say that an argument is skeptically (credulously) rejected if it is attacked by a skeptically (respectively, credulously) justified argument. 

\begin{example}[Formal argumentation]
To illustrate the above notions, consider a famous example in nonmonotonic reasoning, called the Nixon diamond, a scenario in which default assumptions lead to mutually inconsistent conclusions:
\begin{itemize}
\item Usually, Quakers are pacifist.
\item Usually, Republicans are not pacifist.
\item Richard Nixon is both a Quaker and a Republican.
\end{itemize}
In terms of ABA, let $\mathcal{L} = \{Quaker(\mathrm{RN}), Republican(\mathrm{RN}), pacifist(\mathrm{RN})$, $\neg pacifist(\mathrm{RN})$, $asm_p(\mathrm{RN})$, $asm_{\neg p}(\mathrm{RN})\}$ where $\mathrm{RN}$ denotes Richard Nixon, $\mathcal{A} = \{asm_p(\mathrm{RN}), asm_{\neg p}(\mathrm{RN})\}$, $\overline{asm_p(\mathrm{RN})} =$ $\neg pacifist(\mathrm{RN})$, $\overline{asm_{\neg p}(\mathrm{RN})}= pacifist(\mathrm{RN})$, and $\mathcal{R} = \{Quaker(\mathrm{RN}) \leftarrow, Republican(\mathrm{RN})\leftarrow, pacifist(\mathrm{RN})\leftarrow Quaker(\mathrm{RN}),asm_p(\mathrm{RN})$, $\neg pacifist(\mathrm{RN})\leftarrow Republican(\mathrm{RN}),asm_{\neg p}(\mathrm{RN})\}$. Then, there are 4 arguments as follows, in which $Y_4$ attacks $Y_1$ and $Y_3$, and $Y_3$ attacks $Y_2$ and $Y_4$, as illustrated in Fig. \ref{fig:ex-111}:
\begin{itemize}
    \item $Y_1:\{asm_p(\mathrm{RN})\}\vdash asm_p(\mathrm{RN})$
    \item $Y_2:\{asm_{\neg p}(\mathrm{RN})\}\vdash asm_{\neg p}(\mathrm{RN})$
    \item $Y_3:\{asm_p(\mathrm{RN})\}\vdash pacifist(\mathrm{RN})$
    \item $Y_4:\{asm_{\neg p}(\mathrm{RN})\}\vdash \neg pacifist(\mathrm{RN})$
\end{itemize}
\begin{figure}[h!]
  \centering
 \includegraphics[width=0.45\textwidth]{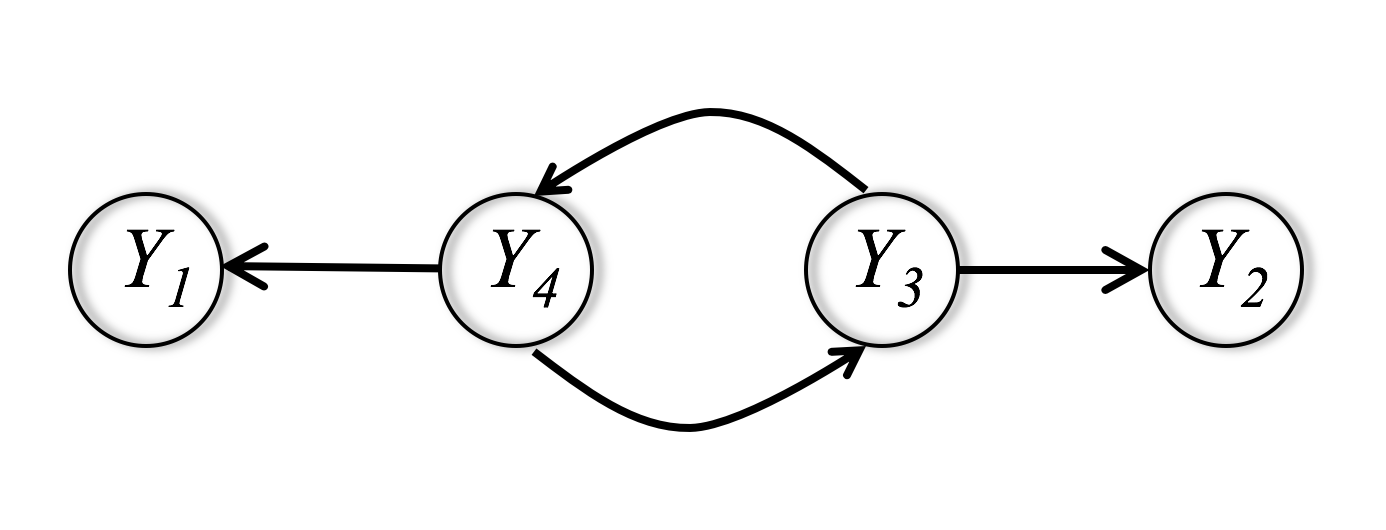}
  \caption{The AAF of the Nixon diamond example.}
    \label{fig:ex-111}
\end{figure}
\end{example}
Then, under grounded semantics, there is only one extension which is an empty set; under complete semantics, there are three extensions $\{Y_1, Y_3\}$, $\{Y_2, Y_4\}$ and $\{\}$; under stable and preferred semantics, there are two extensions  $\{Y_1, Y_3\}$ and $\{Y_2, Y_4\}$. No argument is skeptically justified under all semantics, all arguments are credulously justified under all semantics except grounded. 
For more information about argumentation semantics, please refer to \cite{DBLP:journals/ker/BaroniCG11}.

\section{Representing a value driven agent}
In this section, we introduce a formal language and use it to represent the knowledge and model of a VDA, which lays a foundation for argumentation-based justification and explanation of the decision-making of a VDA. 



The language of a VDA is composed of atoms of perceptions, actions and duties. 


\begin{definition}[Language of a VDA]\label{def-L}
Let $Atom$ be a set of atoms of perceptions, and $Sig$ be a set of signatures. Let $\mathrm{L}= (Atom, A$, $D)$ be a language consisting of 
\begin{itemize}
\item a set of atoms $Atom$, 
\item a set of actions $A\subseteq Sig$, and 
\item a set of duties $D\subseteq Sig$, 
\end{itemize}
such that $A$ and $D$ are disjoint. 
\end{definition}



\begin{example}[Language of a VDA]
In \cite{DBLP:journals/pieee/Anderson19}, there is a set of 10 atoms of perceptions (denoted $Atom_1$): low battery (lb), medication reminder time (mrt), reminded (r), refused medication (rm), fully charged (fc), no interaction (ni), warned (w), persistent immobility (pi), engaged (e), ignored warning (iw);  a set of 6 actions (denoted $A_1$): charge, remind, engage, warn, notify and seek task; and a set of 7 duties (denoted $D_1$): maximize honor commitments ($\mathrm{MHC}$), maximize maintain readiness ($\mathrm{MMR}$), minimize harm to patient ($\mathrm{mH2P}$), maximize good to patient ($\mathrm{MG2P}$), minimize non-interaction ($\mathrm{mNI}$), maximize respect autonomy ($\mathrm{MRA}$) and maximize prevent persistent immobility ($\mathrm{MPPI}$). 
The language of this VDA is then denoted $\mathrm{L}_1 = (Atom_1,  A_1,$ $D_1)$. 
\end{example}

The duties enumerated above have been developed by the project ethicist using GenEth \cite{DBLP:journals/paladyn/AndersonA18} and represent the set needed to drive an eldercare robot in performing the specified actions as shown in \cite{DBLP:journals/pieee/Anderson19}.  

Let $Lit = Atom \cup \{\neg p \mid p\in Atom\}$ be a set of literals. For $l_1, l_2\in L$, we write $l_1 = - l_2$ just in case $l_1 = \neg l_2$ or $l_2 = \neg l_1$. Let $P\subseteq Atom$ be a set of true perceptions. 
Then, the state of the world can be defined in terms of $P$, called a \emph{situation} in this paper, as follows. 


\begin{definition}[Situation] \label{def-situation}
A situation $S$ is a subset of $Lit$,  
such that $S = P\cup \{\neg p \mid p\in Atom\setminus P\}$. 
The set of situations is denoted as $SIT$. 
\end{definition}


\begin{example}[Situation] \label{ex-situation} 
Let $Lit_1 = Atom_1 \cup \{\neg p \mid p\in Atom_1\}$ be a set of literals, which can be extended when a VDA becomes more sophisticated. See Section 4 for details. Let $P_1 = \{mrt, r$, $  rm,  fc\}$ be a set of true perceptions. 
An example of the state of the world: $S_1 = \{\neg lb,   mrt,   r$, $  rm,  fc$, $\neg ni$, $\neg w$, $\neg pi$, $\neg e$, $\neg iw\}$.
\end{example}

Situation $S$ determines the satisfaction and/or violation degree of duties $D$ by actions $A$. In each situation, all duty satisfaction/violation values for each action are determined by a decision tree using the perceptions of the situation as input. A set of vectors of duty satisfaction/violation values of all actions in a situation is called an \emph{action matrix}. The decision tree is derived from a set of known situation/action matrix pairs.


\begin{definition}[Action matrix of a situation]
A duty satisfaction value is a positive integer, while a duty violation value is a negative integer. In addition, if a duty is neither satisfied nor violated by the action, the value is zero. Given an action $\alpha\in A$ and a situation $S\in SIT$, a vector of duty satisfaction/violation values for $\alpha$, denoted as  $v_S(\alpha)$, is a vector $v_S(\alpha) = ( d_1: v_{S, \alpha}(d_1), \dots, d_n: v_{S, \alpha}(d_n))$ where $v_{S, \alpha}(d_i)$ is the satisfaction/violation value of $d_i\in D$ w.r.t $\alpha$ in S. Then, an action matrix of a situation $S$ is defined as $M_S = \{v_S(\alpha) \mid \alpha\in A\}$. The set of action matrices of all situations $SIT$ is denoted as $M_{SIT} = \{M_S \mid S\in SIT\}$. 
\end{definition}

In this definition, a vector of duty satisfaction/violation values represents the \textit{ethical consequences} of its corresponding action in a given situation. An action's ethical consequences are denoted by how much its execution will satisfy or violate each duty. Conflicts arising between actions will be resolved by a principle abstracted from cases.

For brevity, when the order of duties is clear,  $v_S(\alpha) = ( d_1: v_{S, \alpha}(d_1), \dots, d_n: v_{S, \alpha}(d_n)  )$ is also written as $v_S(\alpha) = ( v_{S, \alpha}(d_1), \dots,  v_{S, \alpha}(d_n)  )$.  

\begin{example}[Action matrix of a situation]\label{ex-action-matrix} 
Given a state of the world $S_1$ (as denoted in Example \ref{ex-situation}), derived as described in \cite{DBLP:journals/pieee/Anderson19},  the action matrix of $S_1$ is $M_{S_1} = \{v_{S_1}(charge)$, $v_{S_1}(remind)$, $v_{S_1}(engage)$, $v_{S_1}(warn)$, $v_{S_1}(notify), v_{S_1}(seekTask)\}$, where
\begin{description}
\item $v_{S_1}(charge) = (0, 1,  -1, -1, 0, 0, 0)$,
\item $v_{S_1}(remind) = (-1, -1, -1, -1, 0, 0, 0)$,
\item $v_{S_1}(engage) = (0, -1, -1, -1, 0, 0, 0)$,
\item $v_{S_1}(warn) = (0, 0, 1, -1, 0, -1, 0)$,
\item $v_{S_1}(notify) = (0, 0, 1, -1, 0, -2, 0)$,
\item $v_{S_1}(seekTask) = (0, -1, -1, 1, 0, 0, 0)$.
\end{description}

The duties in each vector are $\mathrm{MHC}$, $\mathrm{MMR}$, $\mathrm{mH2P}$, $\mathrm{MG2P}$, $\mathrm{mNI}$, $\mathrm{MRA}$ and $\mathrm{MPPI}$ in order. Each duty satisfaction/violation vector denotes how much the associated action satisfies or violates each of these duties, positive values representing satisfaction (1=some, 2=much) and negative values representing violation (-1=some, -2=much). The value 0 denotes that an action neither satisfies nor violates a duty. For example, the vector $v_{S_1}(charge)$ specifies that under situation $S_1$, action \textit{charge} satisfies Maximize Maintain Readiness with degree 1, while violating Minimize Harm to Patient and Maximize Good to Patient with degree 1.

For readability, this can also be presented in tabular form:

\begin{tabular}{ |p{1.3cm}||p{1cm}|p{1cm}|p{1cm}|p{1cm}|p{1cm}|p{1cm}|p{1cm}| }
 \hline
 &MHC &MMR &mH2P &MG2P &mNI &MRA &MPPI
 \\
 \hline
 charge &0   &1 &-1 &-1 &0 &0 &0\\
 remind &-1 &-1 &-1 &-1 &0 &0 &0\\
 engage &0 &-1 &-1 &-1 &0 &0 &0\\
 warn   &0 &0 &1 &-1 &0 &-1 &0\\
 notify &0 &0 &1 &-1 &0 &-2 &0\\
 seekTask &0 &-1 &-1 &1 &0 &0 &0\\
 \hline
\end{tabular}
\end{example}

Given a situation and its corresponding action matrix, actions can be sorted in order of ethical preference using a \textit{principle} abstracted from a set of cases by applying ILP techniques.  Clauses of the principle specify learned lower bounds of the differentials between corresponding duties of any two actions that must be met or exceeded to satisfy the clause.

Let $v_{S}(\alpha_1) = ( d_1: v_{S,\alpha_1}(d_1), \dots$, $d_n: v_{S, \alpha_1}(d_n))$ and $v_{S}(\alpha_2) = ( d_1: v_{S,\alpha_2}(d_1)$, $\dots, d_n: v_{S, \alpha_2}(d_n))$ be vectors of duty satisfaction/violation values. In the following definitions, we use 
 $w = v_{S}(\alpha_1) - v_{S}(\alpha_2) = (d_1: w(d_1), \dots, d_n: w(d_n))$ to denote a vector of the differentials of $v_{S}(\alpha_1)$ and $v_{S}(\alpha_2)$, where $w(d_1) = v_{S,\alpha_1}(d_1) - v_{S,\alpha_2}(d_1)$, \dots, $w(d_n) = v_{S,\alpha_1}(d_n) - v_{S,\alpha_2}(d_n)$. 

By considering a set of cases, we may obtain a set of vectors of acceptable lower bounds of satisfaction/violation degree differentials such that all positive cases meet or exceed the lower bounds of some vector, while no negative case does. 

\begin{definition}[Principle]
A principle is defined as  $\pi = \{u_1, \dots, u_k\}$, where $u_i =  ( d_1: u_i(d_1), ..., d_n: u_i(d_n)  )$, where $d_j$ is a duty, and $u_i(d_j)$ is the acceptable lower bound of the differentials between corresponding duties of two actions in $A$. 
\end{definition}

Intuitively, each $u_i$ of a principle is a collection of values denoting how much more an action must, at least, satisfy each duty (or how much, at most, it can violate each duty) than another action for it to be considered the ethically preferable of the pair.  As duties are not necessarily equally weighted nor form a weighted hierarchy, principle $\pi$ is required to determine which duty (or set of duties) is (are) paramount in the current context.  For brevity, when the order of duties is clear, in a principle the lower bounds of the differentials between duties is also written as $u_i =  ( u_i(d_1), ...,  u_i(d_n))$. 

\begin{example}[Principle] \label{ex-principle}
According to \cite{DBLP:journals/pieee/Anderson19}, we have
$\pi_1 = \{u_1, \dots, u_{10}\}$ where 
\begin{description}
\item $u_1 = (-1, -4, -4, -2, -4, -4, 2)$,
\item $u_2 = (-1, -4, -4, -2, 0, 0, 1)$,
\item $u_3 = (0, -3, 0, -1, 0, 1, 0)$,
\item $u_4 = (0, -3, 0, 1, 0, 0, 0)$,
\item $u_5 = (0, -1, 0, 0, 0, 0, 0)$,
\item $u_6 = (0, -3, 0, -1, 1, -1, 0)$,
\item $u_7 = (-1, -4, 1, -2, -4, -4, 0)$,
\item $u_8 = (1, -3, 0, -2, -4, -4, 0)$,
\item $u_9 = (0, 3, 0, -2, 0, 0, 0)$,
\item $u_{10} = (-1, -4, 1, -1, -4, -4, -1)$.
\end{description}
The 10 elements in $\pi_1$ correspond to 10 disjuncts of the principle $p(a_1,a_2)$ in  \cite{DBLP:journals/pieee/Anderson19}. Each disjunct of the principle specifies a relationship between duties of an ordered pair actions that, if held, establishes that the first action of the pair ($a_1$) is ethically preferable to the second ($a_2$). For example, $u_1$ states that action $a_1$ is ethical preferable to action $a_2$ if: $a_2$ satisfies Maximize Honor Commitments no more that 1 more than $a_1$ (or $a_1$ violates it no more that 1 more than $a_2$), $a_2$ satisfies Maximize Good to Patient no more that 2 more than $a_1$ (or $a_1$ violates it no more that 2 more than $a_2$), and $a_1$ satisfies Maximize Prevent Persistent Immobility by at least 2 more than $a_2$ (or $a_2$ violates it by at least 2 more than $a_1$). As the lower bounds of disjunct $u_1$ for each other duty are minimal (i.e. it is not possible given the current ranges of duty satisfaction/violation values to generate a value lower), any relationship between the values of each action is acceptable. 

In tabular form:

\begin{tabular}{ |p{1cm}||p{1cm}|p{1cm}|p{1cm}|p{1cm}|p{1cm}|p{1cm}|p{1cm}| }
 \hline
 &MHC &MMR &mH2P &MG2P &mNI &MRA &MPPI
 \\
 \hline
 $u_1$ &-1 &-4 &-4 &-2 &-4 &-4 &2\\
 $u_2$ &-1 &-4 &-4 &-2 &0 &0 &1\\
 $u_3$ &0 &-3 &0 &-1 &0 &1 &0\\
 $u_4$ &0 &-3 &0 &1 &0 &0 &0\\
 $u_5$ &0 &-1 &0 &0 &0 &0 &0\\
 $u_6$ &0 &-3 &0 &-1 &1 &-1 &0\\
 $u_7$ &-1 &-4 &1 &-2 &-4 &-4 &0\\
 $u_8$ &1 &-3 &0 &-2 &-4 &-4 &0\\
 $u_9$ &0 &3 &0 &-2 &0 &0 &0\\
 $u_{10}$ &-1 &-4 &1 &-1 &-4 &-4 &-1\\
 \hline
\end{tabular}
\end{example}

Given a principle and two vectors of duty satisfaction/violation values, we may define a notion of ethical preference over actions.

 \begin{definition}[Ethical preference over actions]
 Given a principle $\pi$, a situation $S$, and two actions $\alpha_1$ and $\alpha_2$, let $w$ be the differentials of $v_{S}(\alpha_1)$ and $v_{S}(\alpha_2)$ as mentioned above. We say that $\alpha_1$ is ethically preferable (or equal) to $\alpha_2$ with respect to some $u\in \pi$, written as $v_{S}(\alpha_1) \ge_u v_{S}(\alpha_2)$, if and only if  for each $d_i: w(d_i)$ in $w$ and $d_i: u(d_i)$ in $u$, it holds that $w(d_i) \ge  u(d_i)$.
\end{definition}



In this definition, we make explicit the disjuncts $(u)$ in the clause of the principle that are used to order two actions. 

Given two actions $\alpha_1$ and $\alpha_2$, there might exist two different clauses of $\pi$, say $u_1, u_2\in \pi$, such that $v_{S}(\alpha_1) \ge_u v_{S}(\alpha_2)$ and $v_{S}(\alpha_2) \ge_{u^\prime} v_{S}(\alpha_1)$ where $u,u^\prime\in \pi$ and $u\neq u^\prime$. In this case, we say that neither action $\alpha_1$ nor action $\alpha_2$ is ethically preferable to the other.  In other words, according to the principle, there is no ethical justification to choose one over the other.

Based on the above notions, a value driven agent (VDA) is formally defined as follows.

\begin{definition}[Value driven agent]
A value driven agent is a tuple $Ag = (\mathrm{L}, SIT$, $M_{SIT}, \pi)$ where $\mathrm{L}= (Atom$, $A, D)$.  
\end{definition}

\begin{example}[Value driven agent]
According to the above examples, we have $Ag_1 = (\mathrm{L}_1, SIT_1$, $M_{SIT_1}, \pi_1)$ where $SIT_1$ contains $S_1$ and $M_{SIT_1}$ contains $M_{S_1}$.   
\end{example}

In a VDA, given a situation and an action matrix, a set of solutions can be defined as follows.   

\begin{definition}[Solution] \label{def-solution}
Let $Ag = (\mathrm{L}, SIT, M_{SIT}, \pi)$ be a value driven agent, where $\mathrm{L}= (Atom, A, D)$. Given a situation $S\in SIT$ and an action matrix $M_S\in M_{SIT}$, a solution of $Ag$ with respect to $S$ is $\alpha \in A$ if and only if there is an ordering of $M_S$ with respect to $\pi$ such that $v_{S}(\alpha)$ is the first in that ordering.
The set of all solutions of $Ag$ with respect to $S$ is denoted as $sol(Ag, M_S, \pi) = \{\alpha \in A \mid  \alpha$ is a solution of $Ag$ w.r.t. $S\}$.
\end{definition}

\begin{example}[Solution]  
Given $Ag_1 = (\mathrm{L}_1, SIT_1$, $M_{SIT_1}, \pi_1)$, $S_1$ and $M_{S_1}$,  there is a unique ordering of $M_{S_1}$ with respect to $\pi_1$:  $v_{S_1}(warn) \ge_{u_5} v_{S_1}(notify)\ge_{u_7} v_{S_1}(seekTask) \ge_{u_4} v_{S_1}(charge) \ge_{u_5/u_8} v_{S_1}(engage) \ge_{u_5/u_8} v_{S_1}(remind)$. So, $Ag_1$ has only one solution $warn$.
\end{example}

According to Definition \ref{def-solution}, we directly have the following proposition. 

\begin{proposition}[The number of solutions]
Given $Ag = (\mathrm{L}, SIT, M_{SIT}, \pi)$, a situation $S\in SIT$ and an action matrix $M_S\in M_{SIT}$, there are $k$ solutions of $Ag$ if and only if there are $k$ different orderings of
$M_S$ with respect to $\pi$ such that in each ordering the first element is different from the ones in all other orderings. 
\end{proposition}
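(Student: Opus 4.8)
The plan is to prove the biconditional by directly unfolding Definition~\ref{def-solution}: the proposition is in essence a restatement of what a solution is, so no new machinery is needed. First I would introduce the set $F$ of all vectors that occur as the first element of some ordering of $M_S$ with respect to $\pi$. Definition~\ref{def-solution} says precisely that $\alpha \in sol(Ag, M_S, \pi)$ if and only if $v_S(\alpha) \in F$, so $sol(Ag, M_S, \pi)$ is exactly the set of actions whose duty vector lies in $F$. Reading $M_S$ as indexed by actions, as it is used in the running examples, distinct solutions then correspond to distinct elements of $F$, whence $|sol(Ag, M_S, \pi)| = |F|$. I would also record the reading of the statement that makes it true, namely that ``there are $k$ different orderings $\dots$ such that $\dots$'' means that a family of orderings with pairwise distinct first elements has maximum size $k$ (equivalently, exactly $k$ distinct vectors are achievable in first position).

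For the ``if'' direction, suppose $O_1, \dots, O_k$ are orderings of $M_S$ with respect to $\pi$ whose first elements are pairwise distinct, with $k$ maximal for this property. Their first elements all lie in $F$, so $|F| \ge k$; and if $|F| > k$, choosing for each $v \in F$ one ordering having $v$ first would produce a family of more than $k$ orderings with pairwise distinct first elements, contradicting maximality. Hence $|F| = k$, and by the first step $|sol(Ag, M_S, \pi)| = k$.

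For the ``only if'' direction, suppose $sol(Ag, M_S, \pi) = \{\alpha_1, \dots, \alpha_k\}$ with the $\alpha_i$ pairwise distinct. By Definition~\ref{def-solution}, for each $i$ there is an ordering $O_i$ of $M_S$ with respect to $\pi$ in which $v_S(\alpha_i)$ is first; since the $v_S(\alpha_i)$ are pairwise distinct, the $O_i$ form $k$ distinct orderings with pairwise distinct first elements, and no strictly larger such family exists because every first element belongs to $F = v_S(sol(Ag, M_S, \pi))$, which has $k$ elements. This yields exactly $k$ orderings of the required kind.

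I do not expect any real obstacle: the whole argument is bookkeeping around Definition~\ref{def-solution}, which is why the authors describe the proposition as following directly. The only point deserving care is the identification $|sol(Ag, M_S, \pi)| = |F|$, which presumes that two distinct actions do not collapse to the same duty-satisfaction/violation vector in $M_S$; under the strict set-theoretic reading of $M_S$ one would instead state the count in terms of achievable first vectors, but this matches the indexed-by-action usage adopted throughout the paper.
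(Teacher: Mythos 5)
Your proposal is correct and matches the paper's treatment: the paper offers no explicit proof, asserting only that the proposition follows directly from Definition~\ref{def-solution}, and your argument is precisely the careful unfolding of that definition. Your closing caveat about distinct actions possibly sharing a duty vector is a fair observation, but it does not change the substance.
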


In summarizing this section, we may conclude that the formal model of a VDA  properly captures the underlying knowledge of a VDA, and, to our knowledge, is the first such formalization.  It lays the foundation for developing a methodology for justifying and explaining the decision-making of a VDA. 

\section{Argumentation-based justification and explanation}
\subsection{ABA-based argumentation systems of a VDA}
As described in the previous section, in a VDA, a decision is made by checking whether there is an ordering over the set of actions according to the ethical preference relations. However,  it is not clear how the ethical consequences of various actions affect each other,  nor how the disjuncts of the principle determine the ordering of ethical consequences of actions. In this paper, we exploit Assumption-Based Argumentation (ABA) for the justification and explanation of the decision-making of a VDA by considering these factors. Furthermore, we go beyond the existing version of VDA, considering not only practical reasoning, but also epistemic reasoning, such that the inconsistency of knowledge can be identified and properly handled. 

With the above considerations in mind, an ABA-based argumentation system for practical reasoning of a VDA under a situation $S$ is defined as follows. 

\begin{definition}[ABA-based argumentation system for practical reasoning of a VDA] \label{def-ABA4VDA}
Let $Ag = (\mathrm{L}$, $SIT$, $M_{SIT}$, $\pi)$ be a value driven agent, where $\mathrm{L}= (Atom, A, D)$.  Given a situation $S\in SIT$, the ABA-based argumentation system of $Ag$ for practical reasoning is denoted as $\langle \mathcal{L}_{Ag, S}, \mathcal{R}_{Ag, S}, \mathcal{A}_{Ag, S}, ^{\relbar} \rangle$, where 
\begin{itemize}
\item $\mathcal{L}_{Ag, S} = \pi\cup M_S \cup\{\neg \phi\mid \phi\in M_S \}\cup A $;
\item each element in $\mathcal{R}_{Ag, S}$ belongs to one of the following types of rules:
\begin{itemize}
\item \emph{action rules} of the form $\alpha \leftarrow v_S(\alpha)$,  where $\alpha\in A$ is an action, $v_S(\alpha)\in M_S$ is a vector of the duty satisfaction/violation values of $\alpha$ in the situation $S$ 
such that there exists $d_i: v_{S,\alpha}(d_i)$ such that $v_{S,\alpha}(d_i)\ge 1$;
\item \emph{principle rules} of the form $\neg v_S(\beta)\leftarrow u, v_S(\alpha)$, such that $v_S(\alpha) \ge_u v_S(\beta)$ where $u\in \pi$, and $\alpha, \beta\in A$;

\end{itemize}
 \item $ \mathcal{A}_{Ag, S} \subseteq  M_S$;  
\item  $^\relbar$ is a total mapping from  $\mathcal{A}_{Ag, S}$ into $\mathcal{L}_{Ag, S}$, such that 
  for all $v_S(\alpha)\in \mathcal{A}_{Ag, S}$,  $\overline{v_S(\alpha)} =_{def} \neg v_S(\alpha)$. 
\end{itemize}
\end{definition}

The items in Definition \ref{def-ABA4VDA} are explained as follows.  

First, the language of argumentation for practical reasoning is composed of the set of disjuncts of a principle, the set of duty satisfaction/violation vectors under a given situation, and the set of actions. 
Among them, the first two sets of elements are assumptions, in the sense that both the disjuncts of a principle and the ethical consequences of actions can be attacked. For all $v_S(\alpha)\in M_S$, we may view $v_S(\alpha)$ as a proposition, meaning that the ethical consequence of $\alpha$ in situation $S$ is acceptable. We use $\neg v_S(\alpha)$ to indicate that it is not the case that $ v_S(\alpha)$ holds.   


Second, there are two types of rules for practical reasoning of a VDA. An action rule $\alpha \leftarrow v_S(\alpha)$ can be understood as: if the ethical consequence of action $\alpha$ (i.e. how it would satisfy and/or violate duties) w.r.t. the decision tree, i.e., $v_S(\alpha)$, is acceptable (i.e. satisfies \textit{some} duty),  then $\alpha$ should be executed.  The acceptance of $v_S(\alpha)$ is an assumption, since there might be some other ethical consequences that are more acceptable  according to the principle $\pi$, in the sense that $v_S(\beta)\ge_u v_S(\alpha)$ for some $\beta\in A$ and $u\in\pi$.
Action rules can be automatically and dynamically generated and updated according to the data from a VDA. 


\begin{example}[Action rule] \label{ex-1}
Continuing Example \ref{ex-action-matrix}. Given 
$S_1$, there are four action rules. 
\begin{description}
\item $r_1: charge \leftarrow v_{S_1}(charge)$. 
\item $r_2: warn \leftarrow v_{S_1}(warn)$. 
\item $r_3: notify \leftarrow v_{S_1}(notify)$.
\item $r_4: seekTask \leftarrow v_{S_1}(SeekTask)$.
 \end{description}
\end{example}

In general, action rules are constructed only for those actions that satisfy at least one duty as those that do not are \textit{a priori} less ethically preferable. In the example, neither \textit{remind} nor \textit{engage} satisfy any duty and, thus, no action rule is generated for either. Theoretically, it is possible that \textit{no} action satisfies \textit{any} duty in a given situation.  In that case, the most preferable action would be among those that violated duties the least so action rules for all actions would be constructed. 

Principle rules can be constructed in terms of the priority relation between two vectors of duty satisfaction/violation duties with respect to a principle. For all $\alpha, \beta\in A$, if $v_S(\alpha) \ge_u v_S(\beta)$, then we have a principle rule $\neg v_{S}(\beta) \leftarrow u, v_{S}(\alpha)$, indicating that if both $u$ and $v_{S}(\alpha)$ are accepted, then it is not the case that $v_{S}(\beta)$ is acceptable.  

\begin{example}[Principle rules] \label{ex-pr-1}
Continuing Examples \ref{ex-action-matrix} and \ref{ex-principle}. Given 
$S_1$, there are six principle rules. 
\begin{description}
\item $r_5: \neg v_{S_1}(charge) \leftarrow u_7, v_{S_1}(warn)$. 
\item $r_6: \neg v_{S_1}(charge) \leftarrow u_7, v_{S_1}(notify)$. 
\item $r_7: \neg v_{S_1}(charge) \leftarrow u_4, v_{S_1}(seekTask)$. 
\item $r_8:  \neg v_{S_1}(notify) \leftarrow  u_5, v_{S_1}(warn)$.
\item $r_9: \neg v_{S_1}(seekTask) \leftarrow  u_7, v_{S_1}(warn)$. 
\item $r_{10}: \neg v_{S_1}(seekTask) \leftarrow u_7, v_{S_1}(notify)$.  
 \end{description}
\end{example}


Third, regarding practical reasoning, for simplicity, we assume that only the elements in $M_S$ may be assumptions. 
\begin{example}[Assumptions]\label{ex-ep-assumps}
Continue Example \ref{ex-action-matrix}. For each duty satisfaction/violation vector, if at least one duty in the vector is satisfied, then the vector is regarded as an assumption. So, for practical reasoning of $Ag_1$ under situation $S_1$, we have a set of assumptions denoted as  $\mathcal{A}_{Ag_1, S_1} = \{v_{S_1}(charge), v_{S_1}(warn), v_{S_1}(notify), v_{S_1}(SeekTask)\}$. 
\end{example} 

Fourth, concerning the contrary of each element in $\mathcal{A}_{Ag,S}$,  for each vector $v_S(\alpha)$ of duty satisfaction/violation, its contrary is its negation. 

%

In Definitions \ref{def-ABA4VDA} and \ref{def-situation},  we assume that situation $S$ is given, and can be defined directly by a set of perceptions without epistemic reasoning. However, in many cases, perceptions are unreliable, and a VDA usually only has incomplete and uncertain information. To properly capture the state of the world based on a set of perceptions, epistemic reasoning is needed for inferring implicit knowledge about the world, and for handling inconsistency of knowledge of a VDA. Corresponding to practical reasoning, an ABA-based argumentation system for epistemic reasoning is  defined as follows. 

\begin{definition}[ABA-based argumentation system for epistemic reasoning of a VDA] \label{def-ep-arg-system}
Let $\mathrm{L}= (Atom, A, D)$ be the language of a VDA. Let $Lit = Atom \cup \{\neg p \mid p\in Atom\}$ be the set of literals of the VDA. The ABA-based argumentation system for epistemic reasoning of the VDA is denoted as $\langle {Lit}, \mathcal{R}_{Lit}, \mathcal{A}_{Lit}, ^{\relbar} \rangle$, where 
\begin{itemize}
\item each element in $\mathcal{R}_{Lit}$ is an \emph{epistemic rule} of the form $p \leftarrow p_1, \dots, p_n$ where $p, p_i\in {Lit}$;
 \item  $ \mathcal{A}_{Lit} \subseteq  {Lit}$; 
\item  $^\relbar$ is a total mapping from  $\mathcal{A}_{Lit}$ to ${Lit}$, such that 
 for all $p\in \mathcal{A}_{Lit}$, $\overline{p}\in {Lit}$.
 
\end{itemize}
\end{definition}

In this definition, epistemic rules are used to reason about the state of the world. Consider the following example.

\begin{example}[Epistemic rules]\label{ex-ep-rules}
In situation $S_2$, let the set of true perceptions be $P_2 = \{mrt, r, rm, fc, lb, ab\}$, where $ab$ is a new atom being added to $Atom_1$, denoting that the battery is abnormal. Let $Atom_2 = Atom_1 \cup \{ab\}$. In terms Definition \ref{def-situation}, $S_2 = P_2\cup \{\neg p \mid p\in Atom_2\setminus P_2\}$ which is inconsistent if $lb$ (`low battery') and $fc$ (`fully charged') cannot hold at the same time. From the perspective of epistemic reasoning, some of perceptions  can be viewed as assumptions, e.g., `low battery', `fully charged'. Meanwhile, due to incomplete information, assume that `battery is not abnormal'. In addition, it is reasonable that if  `low battery'  holds  then `fully charged' does not hold, and if `fully charged' holds and the battery is normal then  `low battery' does not hold. 
Under this setting, we have three epistemic rules for reasoning about assumptions: $r_{11}: \neg fc \leftarrow lb$, $r_{12}: \neg lb \leftarrow fc, \neg ab$,  
and $r_{13}: ab \leftarrow $. In addition, there are other epistemic rules corresponding to the facts, including $mrt$, $r$, and $rm$, etc. Since they have no interactions with assumptions and other rules, for simplicity, they are omitted.  Let $Lit_2 = Lit_1\cup \{ab, \neg ab\}$, and $\mathcal{R}_{Lit_2} = \{r_{11}, r_{12}, r_{13}\}$.  

\end{example} 


Then, given an ABA-based argumentation system of a VDA under a situation $S$, arguments and attacks can be defined as follows. 

\begin{definition}[Arguments and attacks]
Let $Ag = (\mathrm{L}$, $SIT, M_{SIT}, \pi)$ be a value driven agent, where $\mathrm{L}= (Atom$, $A, D)$, $\langle \mathcal{L}_{Ag, S}, \mathcal{R}_{Ag, S}, \mathcal{A}_{Ag, S}, ^{\relbar} \rangle$ be an ABA-based argumentation system for practical reasoning of $Ag$ under a situation $S$, and $\langle {Lit}, \mathcal{R}_{Lit}$, $\mathcal{A}_{Lit}, ^{\relbar} \rangle$ be an ABA-based argumentation system for epistemic reasoning of $Ag$.
An argument for $\sigma\in \mathcal{L}_{Ag, S}$ supported by $T\subseteq \mathcal{A}_{Ag, S}$ ( respectively for $\sigma\in \mathcal{L}_{Lit}$ supported by $T\subseteq \mathcal{A}_{Lit}$), written as $T\vdash \sigma$, is a deduction for $\sigma$ supported by $T$.  The conclusion of $T\vdash \sigma$, denoted $concl(T\vdash \sigma)$, is $\sigma$.
An argument $A_1\vdash \sigma_1$ attacks an argument $A_2\vdash \sigma_2$ iff $\sigma_1$ is the contrary of one of the assumptions in $A_2$.  
\end{definition}

The set of arguments constructed from $\langle \mathcal{L}_{Ag, S}, \mathcal{R}_{Ag, S}, \mathcal{A}_{Ag, S}, ^{\relbar} \rangle$ is denoted as $AR_{Ag, S}$, and the set of attacks between the arguments in $AR_{Ag, S}$ is denoted as $ATT_{Ag, S}$. In terms of \cite{DBLP:journals/ai/Dung95}, we call $(AR_{Ag, S}, ATT_{Ag, S})$ an abstract argumentation framework (or AAF for short). Respectively, the AAF constructed from $\langle \mathcal{L}_{Lit}, \mathcal{R}_{Lit}$, $\mathcal{A}_{Lit}, ^{\relbar} \rangle$ is denoted as $(AR_{Lit}, ATT_{Lit})$.

In the remaining part of this section, let us further illustrate the AAFs for practical reasoning and epistemic reasoning of a VDA. On one hand, the AAF constructed from $\langle \mathcal{L}_{Ag, S}, \mathcal{R}_{Ag, S}, \mathcal{A}_{Ag, S}, ^{\relbar} \rangle$ is about practical reasoning (i.e., selecting ethically preferable actions), corresponding to the current version of the VDA in  \cite{DBLP:journals/pieee/Anderson19}. Under this setting, all perceptions are assumed to be facts. No justification about perceptions is considered.   

\begin{example}[AAF for the practical reasoning of a VDA under a given situation]  \label{ex-2}
Continuing Examples~\ref{ex-1} and \ref{ex-pr-1}. Given $\langle \mathcal{L}_{Ag_1, S_1}, \mathcal{R}_{Ag_1, S_1}$, $\mathcal{A}_{Ag_1, S_1}, ^{\relbar} \rangle$ where $\mathcal{L}_{Ag_1, S_1} = \pi_1\cup M_{S_1}\cup \{\neg \phi\mid\phi\in M_{S_1}\}\cup A_1$,  $\mathcal{R}_{Ag_1, S_1} = \{r_1,\dots, r_{10}\}$, and $\mathcal{A}_{Ag_1, S_1} = \{v_{S_1}(charge), v_{S_1}(warn), v_{S_1}(notify), v_{S_1}(SeekTask)\}$, 
we have the following 10 arguments. 
Attacks between arguments are visualized in Fig. \ref{fig:ex-2}.
\begin{itemize}
\item[]$X_{1}$:  $\{v_{S_1}(charge)\} \vdash charge$
\item[]$X_{2}$ : $\{v_{S_1}(warn)\} \vdash warn$
\item[]$X_{3}$ : $\{v_{S_1}(notify)\} \vdash notify$
\item[]$X_{4}$ : $\{v_{S_1}(seekTask)\} \vdash seekTask$
\item[]$X_{5}$ : $\{u_7, v_{S_1}(warn)\} \vdash \neg v_{S_1}(charge)$
\item[]$X_{6}$:  $\{u_7, v_{S_1}(notify)\} \vdash \neg v_{S_1}(charge)$
\item[]$X_{7}$ : $\{u_4, v_{S_1}(seekTask)\} \vdash \neg v_{S_1}(charge)$
\item[]$X_{8}$ : $\{u_5, v_{S_1}(warn)\} \vdash \neg v_{S_1}(notify)$
\item[]$X_{9}$ : $\{u_7, v_{S_1}(warn))\} \vdash \neg v_{S_1}(seekTask)$
\item[]$X_{10}$ : $\{u_7, v_{S_1}(notify)\} \vdash \neg v_{S_1}(seekTask)$
\end{itemize} 

\begin{figure}[h!]
  \centering
 \includegraphics[width=0.55\textwidth]{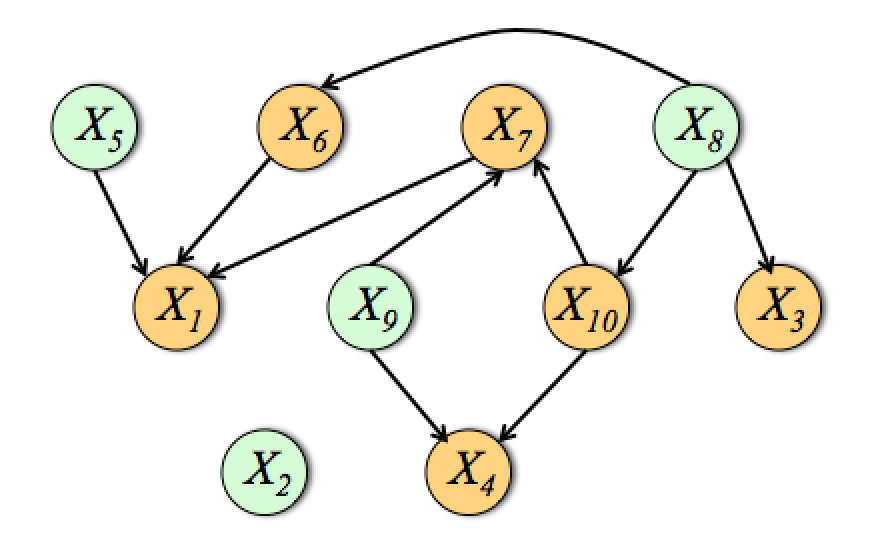}
  \caption{An example of an AAF in situation $S_1$ without considering epistemic reasoning.}
    \label{fig:ex-2}
\end{figure}

On the other hand, as mentioned in Example \ref{ex-ep-rules}, some of perceptions are assumptions that can be in conflict:
when `low battery' and `fully charged' both hold according to the observations,  there exists a conflict between them.  This conflict cannot be identified when only practical reasoning is  considered \cite{DBLP:journals/pieee/Anderson19}. The following example introduces an AAF that can be used for identifying the state of the world of a VDA based on handling the conflicts of its knowledge, i.e., the set of perceptions and epistemic rules. 



\begin{example}[AAF for the epistemic reasoning of a VDA]
In situation $S_2$, given  $\mathcal{R}_{Lit_2} = \{r_{11}, r_{12}, r_{13}\}$ and $\mathcal{A}_{Lit_2} = \{fc, lb, \neg ab\}$, 
there are the following six arguments for epistemic reasoning.  Attacks between arguments are visualized in Fig.\ref{fig:ex-31}.
\end{example}

\begin{itemize}
\item[]$Y_{1}$ : $\{fc\} \vdash fc$
\item[]$Y_{2}$ : $\{lb\} \vdash lb$
\item[]$Y_{3}$ : $\{\neg ab\} \vdash \neg ab$
\item[]$Y_{4}$ : $\{lb\} \vdash \neg fc$
\item[]$Y_{5}$ : $\{fc, \neg ab\} \vdash \neg lb$
\item[]$Y_{6}$ : $\{ \} \vdash ab$
\end{itemize} 

\begin{figure}[h!]
  \centering
 \includegraphics[width=0.4\textwidth]{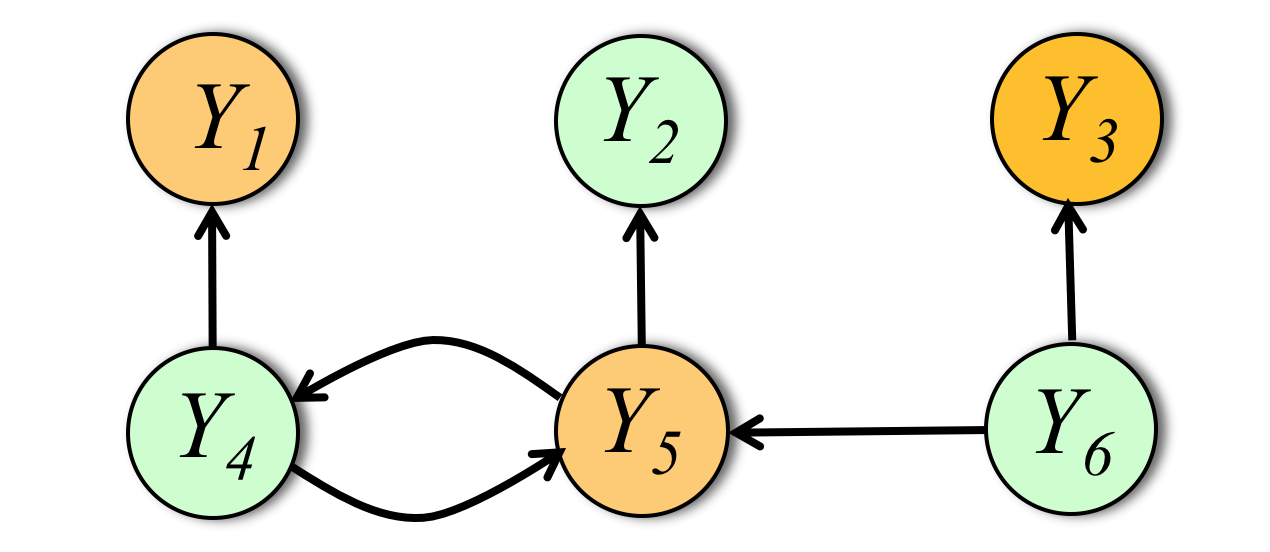}
  \caption{An example of an AAF for epistemic reasoning.}
    \label{fig:ex-31}
\end{figure}
\end{example}

Note that since the situation $S_2$ is determined by the status of the arguments for epistemic reasoning (in this example, arguments $Y_{1}, \dots, Y_{6}$), arguments for practical reasoning might change accordingly. We will further discuss this issue in next subsection. 

\subsection{Argumentation-based justification}
In this subsection, we introduce an argumentation-based approach for justifying an action and a situation of a VDA. Firstly, corresponding to the existing version of the VDA in \cite{DBLP:journals/pieee/Anderson19}, justification is only about actions.

\begin{example}[Extensions of an AAF for practical reasoning] \label{ex-argsem}
Consider the AAF in Figure \ref{fig:ex-2}. It is acyclic and has only one extension under any argumentation semantics, i.e., $E_1 = \{X_2, X_{5}, X_{8}, X_{9}\}$. In this example, all arguments in $E$ are skeptically justified.
\end{example}

\begin{proposition}[Unique complete extension] \label{prop-unique}
Let $Ag = (\mathrm{L}$, $SIT$, $M_{SIT}, \pi)$ be a value driven agent, $\langle \mathcal{L}_{Ag, S}$, $\mathcal{R}_{Ag, S}, \mathcal{A}_{Ag, S}, ^{\relbar} \rangle$ an ABA-based argumentation system for practical reasoning of $Ag$ under a situation $S$, and $(AR_{Ag, S}, ATT_{Ag, S})$ the AAF constructed from the argumentation system. If $Ag$ has a unique solution with respect to $S$, then $(AR_{Ag, S}$, $ATT_{Ag, S})$ has a unique complete extension, which coincides with the unique grounded extension.
\end{proposition}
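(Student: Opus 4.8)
The plan is to analyze the structure of the AAF $(AR_{Ag,S}, ATT_{Ag,S})$ directly, exploiting the very restricted shape of the rules in Definition~\ref{def-ABA4VDA}. Note first that every argument in $AR_{Ag,S}$ has one of exactly two forms: an \emph{action argument} $\{v_S(\alpha)\}\vdash \alpha$ built from a single action rule, or a \emph{principle argument} $\{u, v_S(\alpha)\}\vdash \neg v_S(\beta)$ built from a single principle rule (since rule bodies contain only elements of $M_S$ and $\pi$, and neither action heads nor $\pi$-elements nor negated vectors appear in any rule body, no deeper deductions are possible). Consequently attacks are also very limited: the only contraries are $\overline{v_S(\alpha)} = \neg v_S(\alpha)$, so an argument attacks another only if its conclusion is $\neg v_S(\beta)$ for some $v_S(\beta)$ assumed by the target; i.e. principle arguments attack (action or principle) arguments that rely on the assumption $v_S(\beta)$, and nothing attacks a principle argument whose supporting assumptions $v_S(\alpha)$ are themselves never attacked. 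The key claim I would establish is that $(AR_{Ag,S}, ATT_{Ag,S})$ is acyclic, whence by the remark quoted in the Preliminaries (``When an AAF is acyclic, it has only one extension under all semantics'') it has a unique complete extension coinciding with the grounded one.

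The argument for acyclicity is where the hypothesis that $Ag$ has a \emph{unique} solution is used, and I expect this to be the main obstacle. Suppose toward a contradiction there is a cycle; since principle arguments are the only attackers and they are never attacked \emph{by} action arguments' conclusions (action conclusions $\alpha$ are not contraries of anything), any cycle must pass through principle arguments attacking each other. A principle argument $X:\{u, v_S(\alpha)\}\vdash\neg v_S(\beta)$ is attacked only by a principle argument whose conclusion is $\neg v_S(\alpha)$, i.e. by some $X':\{u', v_S(\gamma)\}\vdash\neg v_S(\alpha)$ with $v_S(\gamma)\ge_{u'} v_S(\alpha)$. So a cycle yields a sequence of actions $\alpha_1, \alpha_2, \ldots, \alpha_1$ with $v_S(\alpha_{i+1})\ge_{u_i} v_S(\alpha_i)$ for suitable $u_i\in\pi$ — that is, a cyclic chain in the ethical preference relation. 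I would then argue that such a cycle forces every action in it to be orderable first (one can always rotate the cycle to put any chosen member at the front and extend to a total order of $M_S$ consistent with $\pi$, using the definition of ordering and solution), contradicting uniqueness of the solution unless the cycle is a single vertex; and a self-attack $X$ attacking $X$ would require $v_S(\alpha)\ge_u v_S(\alpha)$ producing $\neg v_S(\alpha)$ from $v_S(\alpha)$, which by the ``neither preferable'' discussion after the definition of ethical preference would again mean $\alpha$ cannot be a solution while some other action can, or else collapses uniqueness. The delicate point to get right is precisely the bridge from ``cyclic $\ge_u$ chain among the vectors'' to ``more than one element of $M_S$ can appear first in a $\pi$-ordering,'' i.e. tying the abstract cycle back to Definition~\ref{def-solution} and Proposition on the number of solutions.

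Having ruled out cycles, I would finish as follows: an acyclic AAF has a unique extension under every semantics (in particular complete), and since the grounded extension always exists and is complete, the unique complete extension \emph{is} the grounded extension. This gives exactly the statement. I would also remark — as a sanity check rather than part of the proof — that the induced grounded labelling accepts precisely the action argument of the unique solution together with all principle arguments defeating its competitors, matching Example~\ref{ex-argsem}. An alternative, if the acyclicity route proves fiddly, is a direct computation of the grounded extension by the characteristic-function fixpoint and a separate verification that no other complete extension exists using the fact that any complete extension must contain every unattacked principle argument and therefore reject all but the top action; but I expect the acyclicity argument to be cleaner, with the action-ordering/uniqueness link being the one step that needs genuine care.
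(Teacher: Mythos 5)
There is a genuine gap, and it sits exactly where you predicted: the claim that a unique solution forces $(AR_{Ag,S},ATT_{Ag,S})$ to be acyclic is false. Uniqueness of the solution only constrains which action can appear \emph{first} in an ordering; it says nothing about mutual preferences among the dominated actions. The paper itself notes, right after the definition of ethical preference, that one can have $v_S(\beta)\ge_{u'} v_S(\gamma)$ and $v_S(\gamma)\ge_{u''} v_S(\beta)$ simultaneously for distinct $u',u''\in\pi$. Take three actions with $v_S(\alpha)$ strictly dominating both $v_S(\beta)$ and $v_S(\gamma)$ while $\beta$ and $\gamma$ are mutually preferred in this sense: the solution is still unique (only $\alpha$ can be first), yet the principle arguments $\{u',v_S(\beta)\}\vdash\neg v_S(\gamma)$ and $\{u'',v_S(\gamma)\}\vdash\neg v_S(\beta)$ attack each other, a $2$-cycle. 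Your proposed bridge --- ``rotate the cycle to put any chosen member at the front and extend to a total order'' --- breaks precisely here: rotating puts a member at the front \emph{of the cycle}, not at the front of the global ordering, since $\alpha$ still outranks every member of the cycle. So the cycle does not yield a second solution, and acyclicity cannot be salvaged from the hypothesis.

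The conclusion of the proposition survives, but via the paper's direct argument rather than acyclicity. The paper observes that if $\alpha$ is the unique solution then for every $\beta\neq\alpha$ there is some $u\in\pi$ with $v_S(\alpha)$ strictly preferred to $v_S(\beta)$, so no argument concludes $\neg v_S(\alpha)$; hence the action argument $\{v_S(\alpha)\}\vdash\alpha$ and all principle arguments $\{u,v_S(\alpha)\}\vdash\neg v_S(\beta)$ are unattacked and belong to the grounded extension. These in turn attack \emph{every} remaining argument, since every other argument rests on some assumption $v_S(\beta)$ with $\beta\neq\alpha$. Thus every argument is either unattacked or attacked by an unattacked argument, the grounded labelling leaves nothing undecided (the $2$-cycle above is resolved from outside, both of its members being defeated by arguments based on $v_S(\alpha)$), and therefore the grounded extension is the unique complete extension. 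I would suggest replacing your acyclicity step with this ``the solution's arguments are unattacked and collectively attack everything else'' argument; the rest of your structural analysis of the two argument shapes and the attack relation is correct and is exactly what makes that direct argument go through.
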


\begin{proof}
Let $\alpha\in A$ be the unique solution of $Ag$. Since there is no other ordering such that a different action (other than $\alpha$) can be the first action in the sorted list, for all $\beta\in A\setminus \{\alpha\}$, it holds that $v_S(\alpha) >_u v_S(\beta)$ for some $u\in \pi$. This means that each argument $v_S(\beta) \vdash \beta$ is attacked by $v_S(\alpha), u \vdash \neg v_S(\beta)$, which has no attacker. Let $E\subseteq AR_{Ag, S}$ be the set containing $v_S(\alpha) \vdash \alpha$ and all arguments of the form $v_S(\alpha), u \vdash \neg v_S(\beta)$ for all $\beta\in A\setminus \{\alpha\}$.  It turns out that $E$ is the unique complete extension of $(AR_{Ag, S}, ATT_{Ag, S})$.
\end{proof}

Given a set of justified arguments, we may define the set of justified conclusions as follows. 

\begin{definition}[Justified conclusion in practical reasoning]
Let $(AR_{Ag, S}, ATT_{Ag, S})$ be an AAF for practical reasoning, and $X\in AR_{Ag, S}$ be a skeptically (credulously) justified argument under a given argumentation semantics. A skeptically (credulously) justified conclusion is written as $concl(X)$. We say that $concl(X)$ is a skeptically (credulously) justified action if and only if the conclusion of $X$ is an action.
\end{definition}

\begin{example}[Justified conclusions  in practical reasoning] \label{ex-just}
According to Example \ref{ex-argsem}, all elements in $E_1$ are justified conclusions, in which $warn$ is a skeptically justified action since $X_2$ is skeptically justified and its conclusion is an action.
\end{example}

Now, let us verify that the representation by using argumentation-based approach is sound and complete under all semantics mentioned above (i.e., complete, grounded, preferred, stable), in the sense that when a VDA has multiple solutions, each solution of the VDA corresponds exactly to a credulously justified action of the argumentation framework; and when a VDA has a unique solution, the solution corresponds to the unique skeptically justified action of the argumentation framework. 

\begin{proposition}[Soundness and completeness of representation]
Let $Ag = (\mathrm{L}$, $SIT$, $M_{SIT}, \pi)$ be a value driven agent and $\langle \mathcal{L}_{Ag, S}, \mathcal{R}_{Ag, S}, \mathcal{A}_{Ag, S}, ^{\relbar} \rangle$ an ABA-based argumentation system for practical reasoning of $Ag$ under a situation $S$. For all $\alpha\in A$, it holds that: $\alpha$ is one of the solutions of $Ag$ with respect to $S$, if and only if $\alpha$ is a credulously justified action in $(AR_{Ag, S}, ATT_{Ag, S})$.
\end{proposition}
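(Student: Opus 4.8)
The plan is to prove the two implications separately, after first stripping the framework $(AR_{Ag,S},ATT_{Ag,S})$ down to the combinatorial core that governs it. The first step is a reduction lemma. For every action $\alpha$ carrying an action rule, the action argument $X_\alpha=\{v_S(\alpha)\}\vdash\alpha$ attacks no argument (its conclusion is an action, hence not the contrary of any assumption) and has precisely the attackers of the bare assumption argument $\{v_S(\alpha)\}\vdash v_S(\alpha)$, namely the principle arguments with conclusion $\neg v_S(\alpha)$; moreover the disjunct symbols $u\in\pi$ occurring in supports are never attacked. Hence, passing to the assumption level -- a standard move in ABA that preserves complete, grounded, preferred and stable semantics and under which $X_\alpha$ lies in an extension iff $v_S(\alpha)$ lies in the corresponding assumption extension -- the status of every argument is determined by the digraph $\mathcal D$ on the vectors $v_S(\alpha)$ of the duty-satisfying actions in which $v_S(\beta)$ attacks $v_S(\alpha)$ exactly when $v_S(\beta)\ge_u v_S(\alpha)$ for some $u\in\pi$ (with $\beta\neq\alpha$). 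Using the standard fact that every admissible set is contained in a preferred (hence complete) extension, the claim reduces to: $\alpha$ is a credulously justified action iff $v_S(\alpha)$ belongs to some admissible set of $\mathcal D$.

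For the forward direction I would unravel Definition~\ref{def-solution} into an ordering of $M_S$ witnessing that $\alpha$ is a solution -- a permutation $v_S(\alpha)=v_S(\gamma_1),\dots,v_S(\gamma_m)$ with consecutive vectors related by $\ge_{u_i}$ for suitable $u_i\in\pi$ -- and use it to build a conflict-free subset of $\mathcal D$ that contains and defends $v_S(\alpha)$: for each $v_S(\beta)$ that out-dominates $v_S(\alpha)$ one reads a counter-dominator off the ordering, iterates, and collects the vectors so obtained. The unique-solution case is already settled by Proposition~\ref{prop-unique} (there the set is simply $\{v_S(\alpha)\}$ together with the principle arguments it supports), so the real content lies in the general, multi-solution case. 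For the backward direction I would take an extension $E$ containing an argument with conclusion $\alpha$ -- necessarily $X_\alpha$ -- pass to the admissible assumption set $\Delta\ni v_S(\alpha)$, and read the defence condition on $\Delta$ as a recipe for linearly ordering $M_S$ with $v_S(\alpha)$ first.

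The step I expect to be the main obstacle, in both directions, is exactly this passage between the ``chain of pairwise $\ge_u$ steps'' that defines a solution and the indirect attack/defence structure among principle arguments: since $\ge_u$ is neither transitive nor total and $\mathcal D$ may contain cycles, an action argument can be reinstated by a chain of principle arguments supported by \emph{other} accepted action-vectors rather than by $v_S(\alpha)$ itself, so one cannot simply argue, as in Proposition~\ref{prop-unique}, that $v_S(\alpha)$ out-dominates every other vector. Getting this right will require a careful analysis of how $\Delta$ defends $v_S(\alpha)$ -- fixing, for each attacker, a witnessing counter-dominator and organising the resulting relation into a total order on the duty-satisfying actions, with the duty-violating-only actions (which carry no argument) appended at the tail -- and it is also the point at which any additional structural hypothesis on the ILP-abstracted principle $\pi$ would be used. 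Once an ordering with $v_S(\alpha)$ at the top is produced, comparison with Definition~\ref{def-solution} is routine; the equivalence for preferred and stable semantics then follows analogously from the inclusions between these semantics and the complete case, and for grounded semantics from Proposition~\ref{prop-unique} together with the already-noted fact that an acyclic framework has a unique extension.
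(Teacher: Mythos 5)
Your setup (treating the disjuncts $u\in\pi$ as unattackable, reducing to the attack digraph on the assumption vectors, and splitting into two implications) is consistent with the paper, and you correctly locate the crux in the mismatch between the chain of pairwise $\ge_u$ steps defining an ordering and the attack/defence structure. But you then leave that crux unresolved: in the forward direction you propose to ``read a counter-dominator off the ordering, iterate, and collect'' a defending set of \emph{several} vectors, flag this as the main obstacle, and even hedge that additional structural hypotheses on $\pi$ might be needed. That is a plan, not a proof, and it is also more machinery than the statement requires. The paper's forward direction closes the gap with a single observation you do not make: if $\alpha$ heads an ordering of $M_S$, then for \emph{every} $\beta$ with $v_S(\beta)\ge_u v_S(\alpha)$ one must also have $v_S(\alpha)\ge_{u'}v_S(\beta)$ for some $u'\in\pi$ (otherwise $\beta$ would be strictly ethically preferable and $\alpha$ could not be first in any ordering). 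Hence every attacker of $v_S(\alpha)\vdash\alpha$ is counter-attacked by an argument of the form $v_S(\alpha),u'\vdash\neg v_S(\beta)$, supported by $v_S(\alpha)$ itself; the set consisting of $v_S(\alpha)\vdash\alpha$ together with these principle arguments is conflict-free and self-defending, so it is admissible and no chain-chasing or iteration is needed.

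Symmetrically, your backward direction (``read the defence condition on $\Delta$ as a recipe for linearly ordering $M_S$'') is left as a gesture. The paper instead argues directly that if $v_S(\alpha)\vdash\alpha$ lies in an admissible set then no $\beta$ can satisfy $v_S(\beta)>_u v_S(\alpha)$ strictly (i.e., dominate without being counter-dominated), from which an ordering of $M_S$ headed by $v_S(\alpha)$ in the sense of Definition~\ref{def-solution} can be built. Your worry about reinstatement of $v_S(\alpha)$ via principle arguments supported by \emph{other} accepted vectors is a legitimate subtlety here -- the paper's own backward argument dispatches it in one line -- but identifying a difficulty is not the same as overcoming it, and as written your proposal proves neither implication. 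Your appeal to Proposition~\ref{prop-unique} also only covers the unique-solution case, which is not where the content of this proposition lies.
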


\begin{proof}
On one hand, if $\alpha$ is a  solution of $Ag$ with respect to $S$, then there exists an ordering over $A$, such that $\alpha$ is the first action of the sorted list. Let $X\in  \mathcal{L}_{Ag, S}$ be the argument with $\alpha$ its conclusion, and of the form $v_S(\alpha)\vdash \alpha$. For every argument $Y\in AR_{Ag, S} \setminus \{X\}$, if $Y$ is of the form $v_S(\beta), u \vdash \neg v_S(\alpha)$ for some $u\in \pi$, then since $\alpha$ is the first action of the sorted list, it holds that $v_S(\alpha) \ge_{u^\prime} v_S(\beta)$, and there exists an argument $X^\prime$ of the form $v_S(\alpha), u^\prime \vdash \neg v_S(\beta)$ for some $u^\prime \in \pi$. Let $E$ be the set containing $X$ and all arguments  of the form $v_S(\alpha), u^\prime \vdash \neg v_S(\beta)$. Since $E$ is conflict free, and each attacker of $X$ and $X^\prime$ (i.e., $Y$) is attacked by $X^\prime$, $E$ is admissible, and therefore $X\in E$ is credulously justified. Since $\alpha$ is the conclusion of $X$, it is a credulously justified action in $(AR_{Ag, S}, ATT_{Ag, S})$.




On the other hand, if $\alpha$ is a credulously justified action in $(AR_{Ag, S}, ATT_{Ag, S})$, then there exist an argument $X\in AR_{Ag, S}$ of the form $v_S(\alpha)\vdash \alpha$ and an admissible set $E \subseteq AR_{Ag, S}$ such that $X\in E$. For every action $\beta\in A$, if $\beta\neq \alpha$, it is not the case that $v_{S}(\beta) >_u v_{S}(\alpha)$ for some $u\in\pi$.  Otherwise, there exists an argument of the form $v_S(\beta), u  \vdash \neg v_S(\alpha)$. As a result, $X$ is not in $E$. Contradiction. As a result, we may construct an ordering of actions such that $\alpha$ is the first action in the sorted list. Therefore, $\alpha$ is a solution of $Ag$ with respect to $S$.
\end{proof}

When $\alpha$ is a unique solution of $Ag$ with respect to $S$, according to Proposition \ref{prop-unique}, $(AR_{Ag, S}, ATT_{Ag, S})$ has a unique complete extension. Therefore, $\alpha$ is a unique skeptically justified action in $(AR_{Ag, S}, ATT_{Ag, S})$.

Secondly, to justify a situation, we use an AAF for epistemic reasoning. 

\begin{example}[Extensions of an AAF for epistemic reasoning] \label{ex-argsem-epistemic}
Consider the AAF in Figure \ref{fig:ex-31}. It is also acyclic and has only one extension under any argumentation semantics, i.e., $E_2 = \{Y_{2}, Y_{4}, Y_{6}\}$. The set of conclusions of arguments in $E_2$ is denoted as $concl(E_2) = \{lb, \neg fc, ab\}$. 
\end{example}

\begin{definition}[Skeptically justified/rejected assumption in epistemic reasoning]
Let $(AR_{Lit}, ATT_{Lit})$ be an AAF for epistemic reasoning, and $X\in AR_{Lit}$ be a skeptically justified/rejected argument under a given argumentation semantics. We say that $concl(X)$ is a skeptically justified/rejected assumption if and only if the conclusion of $X$ is an assumption.
\end{definition}

If every assumption is either skeptically justified or skeptically rejected, then a situation containing all justified assumptions is skeptical justified. 

\begin{definition}[Justified situation]
Given $Atom$ and $Lit$, let $P\subseteq Atom$ be a set of perceptions, $\mathcal{A}_{Lit}\subseteq Lit$ be a set of assumptions, and $(AR_{Lit}, ATT_{Lit})$ be an AAF constructed from $\langle {Lit}, \mathcal{R}_{Lit}, \mathcal{A}_{Lit}, ^{\relbar} \rangle$. Let $\mathcal{A}_{Lit}^J \subseteq \mathcal{A}_{Lit}$ be a set of skeptically justified assumptions. The set of justified perceptions is $P^J = (P\setminus \mathcal{A}_{Lit})\cup \mathcal{A}^J_{Lit}$.  If every assumption in $\mathcal{A}_{Lit}\setminus\mathcal{A}_{Lit}^J$ is skeptically rejected, then there is a skeptically justified situation $S^J = P^J \cup \{\neg p \mid p\in  Atom\setminus P^J\}$.
\end{definition}

\begin{example}
Given $\mathcal{A}_{Lit_2} = \{fc, lb, \neg ab\}$ and $P_2 = \{lb, mrt, r, rm, fc, ab\}$, we have $P^J_2 = \{lb, mrt, r, rm, ab\}$ and $S_2^J = \{lb, mrt, r, rm, ab, \neg fc, \neg ni, \neg w, \neg pi, \neg e, \neg iw\}$.
\end{example}

Given $S_2^J$, $M_{S^J_2}$ is generated dynamically, presented in tabular form as follows. 

\begin{tabular}{ |p{1.3cm}||p{1cm}|p{1cm}|p{1cm}|p{1cm}|p{1cm}|p{1cm}|p{1cm}| }
 \hline
 &MHC &MMR &mH2P &MG2P &mNI &MRA &MPPI
 \\
 \hline
 charge &0 &2 &-1 &-1 &0 &0 &0\\
 remind &-1 &-2 &-1 &-1 &0 &0 &0\\
 engage &0 &-2 &-1 &-1 &0 &0 &0\\
 warn   &0 &0 &1 &-1 &0 &-1 &0\\
 notify &0 &0 &1 &-1 &0 &-2 &0\\
 seekTask &0 &-1 &-1 &1 &0 &0 &0\\
 \hline
\end{tabular}

In situation $S_2^J$, actions rules and principles rules are the same as those in situation $S_1$ except that the subscript $S_1$ is substituted by $S_2^J$.





\subsection{Argumentation-based explanation}
Besides justifying perceptions and actions, argumentation provides a natural way for explaining why an action is selected or not, by using the notion of justification of arguments and their premises.

\begin{definition}[Explanation of a justified action]
Let $(AR_{Ag, S}, ATT_{Ag, S})$ be an AAF, and $X$ be a skeptically (credulously) justified argument of the form $v_S(\alpha) \vdash \alpha$ under a given argumentation semantics. The explanation of $\alpha$ being a skeptically (credulously) justified action is that: the argument $v_S(\alpha) \vdash \alpha$ is in every extension (one of the extensions) of  $(AR_{Ag, S}, ATT_{Ag, S})$, which is in turn because the assumption $v_S(\alpha)$ is accepted since it has no attacker or all its attachers are attacked by an argument in each extension (respectively, the given extension). 
\end{definition}  

\begin{example}[Explanation of a justified action]
According to Examples \ref{ex-2} and \ref{ex-just}, the explanation of  the action $warn$ being a skeptically justified action is as follows.
\begin{itemize}
\item $X_{2} = v_{S_1}(warn) \vdash warn$ is in the unique extension $E_1$, because: 
\item the assumption (ethical consequence) $v_{S_1}(warn) = (0, 0, 1, -1, 0, -1, 0) $ is accepted since it has no attacker.  
\end{itemize}
\begin{figure}[h!]
  \centering
 \includegraphics[width=0.4\textwidth]{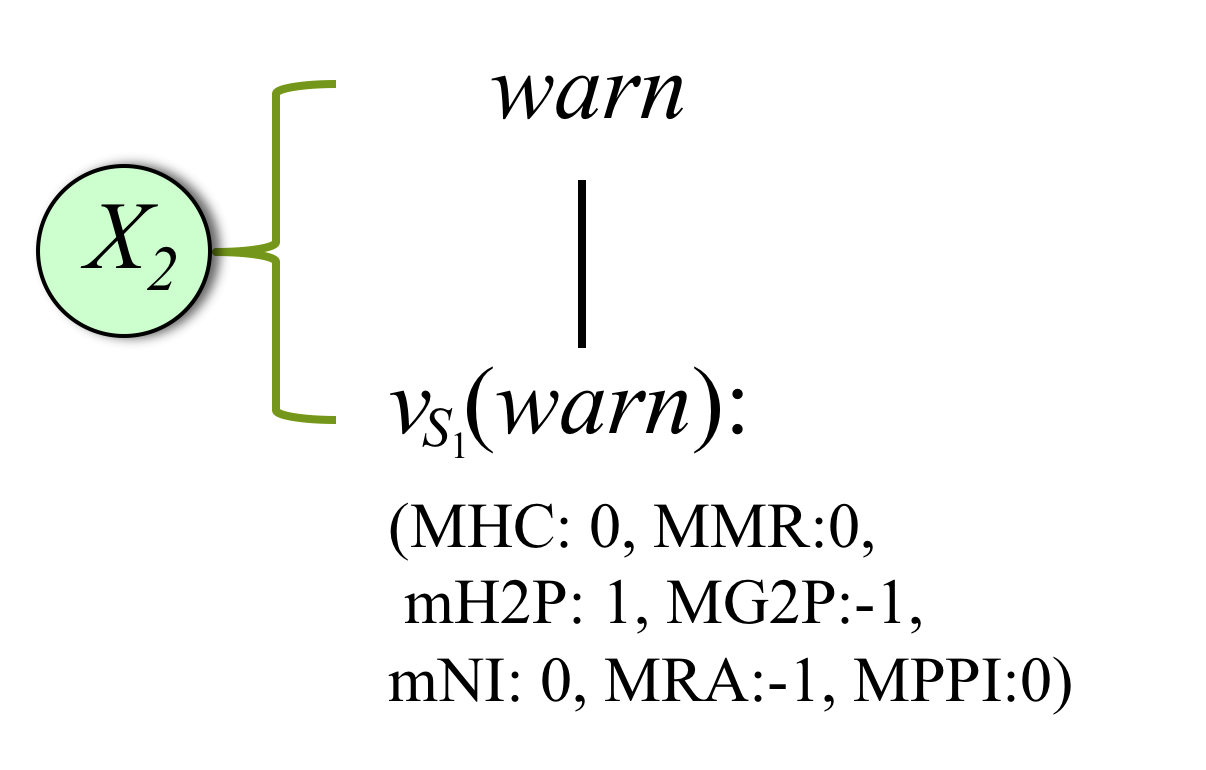}
  \caption{Explanation for a justified action}
    \label{fig:ex-3}
\end{figure}
\end{example}


In this example, accepting the ethical consequence $v_{S_1}(warn)$ means that, under situation $S_1$, $warn$'s satisfaction of Minimize Harm to Patient with degree 1 overrides both  degree 1 violations of Maximize Good to Patient and Maximize Respect Autonomy.

\begin{definition}[Explanation of a rejected action]
Let $(AR_{Ag, S}, ATT_{Ag, S})$ be an AAF, and $X$ of the form $v_S(\alpha) \vdash \alpha$ be a rejected argument under a given argumentation semantics. The explanation of $\alpha$ being a rejected action is that: the argument $v_S(\alpha) \vdash \alpha$ is not in any extension of  $(AR_{Ag, S}, ATT_{Ag, S})$, which is in turn because the assumption (ethical consequence) $v_S(\alpha)$ is not acceptable since in every extension of $(AR_{Ag, S}, ATT_{Ag, S})$, there is an argument attacking $v_S(\alpha) \vdash \alpha$, whose premises (the ethical consequence of another action and a disjunct of the principle) are accepted. 
\end{definition}  

\begin{example}[Explanation of a rejected action]
The explanation of $charge$ being a rejected action is as follows. 
\begin{itemize}
\item The argument $X_{1} = v_{S_1}(charge) \vdash charge$ is not in the unique extension $E_1$, in that:
\item the ethical consequence $v_{S_1}(charge) = (0, 1, -1, -1, 0, 0, 0) $ is not acceptable, since $X_1 = v_{S_1}(charge)\vdash charge$ is attacked by $X_5 = u_7, v_{S_1}(warn) \vdash \neg v_{S_1}(charge)$, whose premises ($u_7$ and $v_{S_1}(warn)$) are accepted.
\end{itemize}
\begin{figure}[h!]
  \centering
 \includegraphics[width=0.8\textwidth]{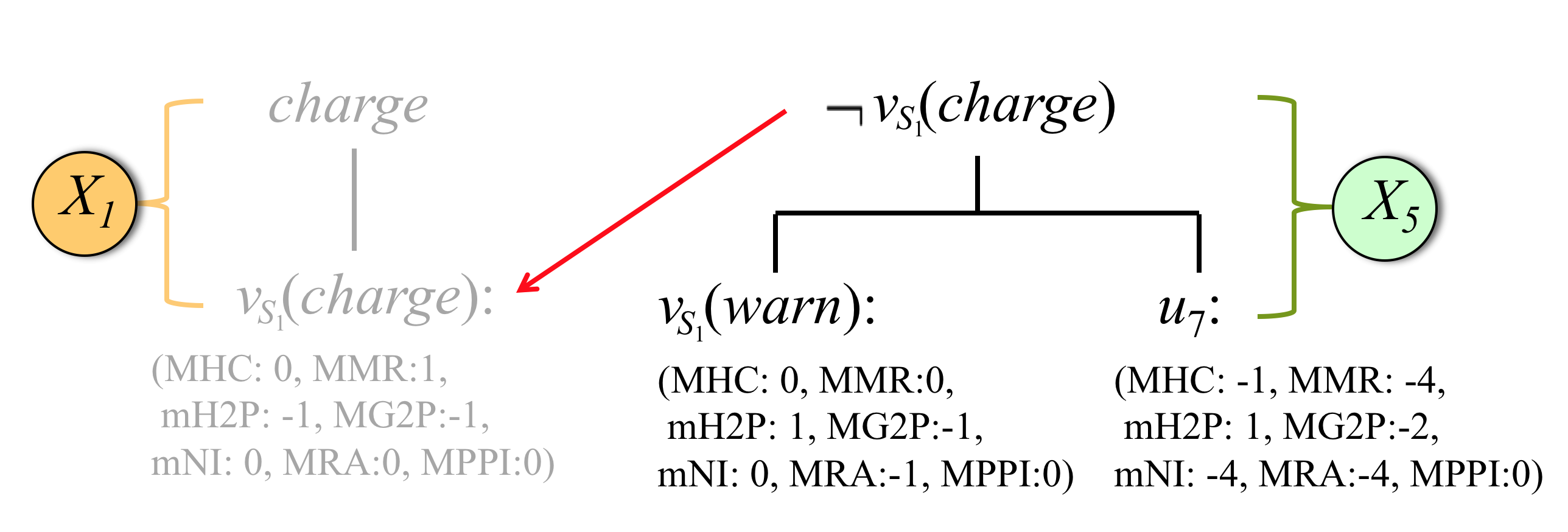}
  \caption{Explanation for a rejected action}
    \label{fig:ex-4}
\end{figure}
\end{example}

In this example, the content of the premises of arguments can be used to further explain the decision. More specifically, given $u_7$ and the ethical consequence depicted by $v_{S_1}(warn)$, the ethical consequence depicted by $v_{S_1}(charge)$ is not acceptable, i.e., $\neg v_{S_1}(charge)$ is acceptable. In other words, under situation $S_1$, satisfying ``minimize harm to patient'' with degree 1 (mH2P:1), even while violating ``maximize respect autonomy'' with degree -1 (MRA: -1), is ethically preferable to satisfying ``maximize maintain readiness'' with degree 1 (MMR: 1). As both actions violate "maximize good to the patient" equally (MG2P: -1), that duty does not help differentiate these actions and therefore has no role in this explanation.

Concerning the explanation of a justified situation, since only assumptions need to be justified, we have the following definition.

\begin{definition}[Explanation of a justified situation]
The explanation of $S$ being a skeptically justified situation is that: each $l \in\mathcal{A}_{Lit}$ is skeptically justified or rejected. In turn, the explanation of a justification/rejection of a literal is similar to that of an action.  
\end{definition}  

\begin{example}
$S_2^J$ is a skeptically justified situation because in the set of assumptions $\{fc, \neg ab, lb\}$, $fc$ and $\neg ab$ are skeptically rejected, and $lb$ is skeptically accepted. The explanation of skeptical justification and rejection of perceptions is in turn described as follows (illustrated in Fig \ref{fig:ex-epistemic}).  
\begin{itemize}
    \item $fc$ and $\neg ab$ are skeptically rejected, because the argument supporting $fc$ (respectively, $\neg ab$) is attacked by a skeptically accepted argument. 
    \item $lb$ is skeptically justified because the argument supporting $lb$ is defended by two skeptically accepted arguments $Y_4$ and $Y_6$.
\end{itemize}
\end{example}

\begin{figure}[h!]
  \centering
 \includegraphics[width=0.8\textwidth]{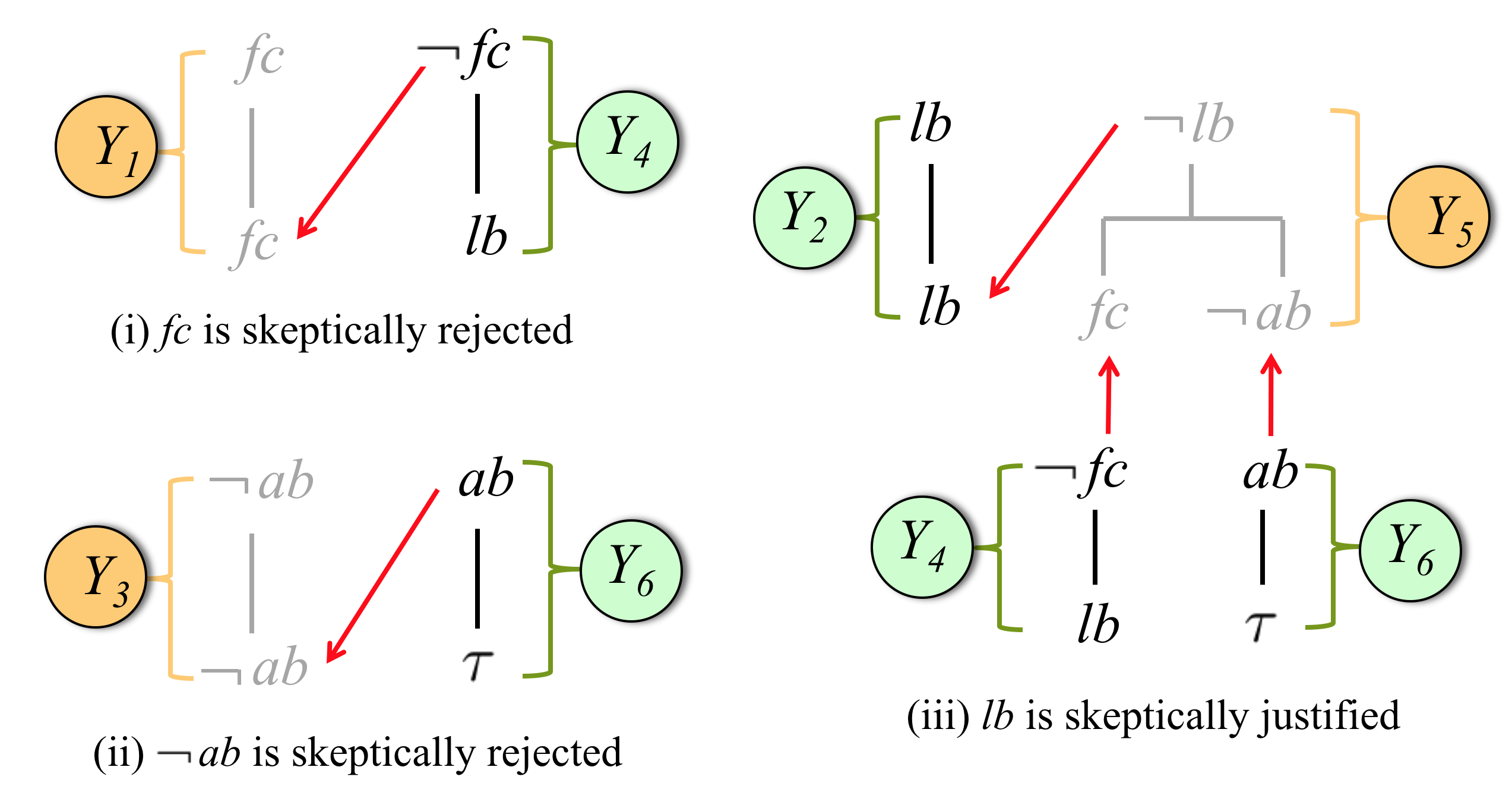}
  \caption{Explanations for skeptically accepted/rejected assumptions}
    \label{fig:ex-epistemic}
\end{figure}

Besides the above-mentioned approach for explaining why an action is selected or rejected, we may also use argument-based dialogues to provide explanations \cite{DBLP:journals/flap/CyrasFST17}.  This is left to future work. 





\section{Related Work}
The work presented in this paper concerns, in the main, how an autonomous agent makes decisions according to ethical considerations and provides an explanation for these decisions.  In this section, we discuss related work from the perspectives of value/ norm-based reasoning, argumentation-based decision making, and explanations in artificial intelligence.

Lopez-Sanchez et al. \cite{DBLP:conf/aies/SerramiaLRMWA18} pertains to the use of ``moral values" in choosing correct norms. Using deontic logic, they associate moral values with norms that exhibit them and incorporate the relative weights of these values as a factor in calculating which norms should take precedence. A correlation might be made between what is termed ``moral values" in the cited work and the concept of duties in Anderson et al's work \cite{DBLP:conf/aaai/AndersonAB17} but doing so reveals the simplistic manner in which these values are treated. In \cite{DBLP:conf/aies/SerramiaLRMWA18}, values are chosen in an arbitrary manner without the support of consideration by ethicists and are not likely to form the total order that they assume. Further, the fact that some actions may satisfy more than one value is not considered nor is the possibility of actions violating values. Also not considered is the possibility that an action satisfies or violates a duty more or less than another. Such possible combinations of different levels of a variety of satisfied and violated duties are likely to require non-linear means to resolve-- this is precisely what the principle formalized in this paper accomplishes.  Lastly, the cited work doesn't seem to address the core of what we are trying to accomplish -- providing arguments/explanations for chosen actions. 

Those who attempt to exploit deontic logic in the service of providing ethical guidance to autonomous systems, (e.g. \cite{kim2018toward}, \cite{arkoudas2005toward}) cite the transparency of the reasoning process as a benefit of such an approach. We would argue that, while a trace of deductive reasoning from premises to a conclusion may be transparent to some, it will forever remain opaque to others. We maintain that an argumentation approach to explanation may be more fruitful.


The work reported in this paper shares some similarity with the symbolic approach introduced in \cite{DBLP:conf/ijcai/ShihCD18}, in the sense that some implicit functions of the system are made explicit by using a symbolic representation. However, rather than translating the function between a set of features and a classification, we translate several types of implicit knowledge of a VDA by a logical formalism. 

Other related works are those based on argumentation. Among others, Liao et al. \cite{DBLP:journals/logcom/LiaoOTV19} introduce an argumentation-based formalism for representing prioritized norms, but do not consider the origin of these priorities, while in this paper the priority relation between the ethical consequences of different actions are learned from a set of cases, guided by the judgement of ethicists.  Cocarascu et al. \cite{Cocarascu2018} introduce an approach to construct an AAF in terms of highest ranked features. While sharing some ideas of developing a methodology of explainable AI by combining argumentation and machine learning, our approach is specific to machine ethics and connects to a different machine learning approach and has a different model of argumentation. Baum et al. \cite{DBLP:conf/isaim/BaumHS18} study the interplay of machine ethics and machine explainability by using argumentation. The idea is close to our work, but focuses on a different research setting and has a different model.  Others also focus on developing general approaches for explanation based on argumentation, e.g., \cite{DBLP:conf/aaai/FanT15}'s work on a new argumentation semantics for giving explanations to arguments in both Abstract Argumentation and Assumption-based Argumentation, and  \cite{DBLP:journals/eswa/GarciaCRS13}'s work on dialectical explanation for argument-based reasoning in knowledge-based systems, etc. However, they are not specific to ethical decision making and explanation. 

Last but not least, in the direction of explanation in artificial intelligence, there are a number of research efforts in recent years. Among them, a recent work by Tim Miller \cite{DBLP:journals/ai/Miller19} provides several  insights from the social sciences,  by considering how people define, generate, select, evaluate, and present explanations. 
 
\section{Conclusions}
In this paper, we have proposed an argumentation-based approach for representation, justification and explanation of a VDA. The contributions are as follows. 

First, we provide a formalism to represent a VDA, making explicit some implicit knowledge. This lays a foundation for the justification and explanation of reasoning and decision making in a VDA. To our knowledge, this is the first effort on providing a formalization for a VDA. 

Second, we adapt existing argumentation theory to the setting of a decision making in a VDA, such that the ethical consequences of actions and clauses of a principle can be used for decision-making and explanation. Furthermore, we go beyond the existing version of VDA, considering not only practical reasoning, but also epistemic reasoning, such that the inconsistency of knowledge of the VDA can be identified, handled and explained.

Third, unlike existing argumentation systems where formal rules are designed in advance, in our approach, action rules and principle rules for practical reasoning are generated and updated at run time in terms of an action matrix and a principle. Thanks to the graphic nature of an AAF, when the system becomes more complex, there exist efficient approaches to handle the dynamics of the system, e.g., \cite{DBLP:journals/ai/LiaoJK11} and \cite{DBLP:books/daglib/0033440}. 

Besides these technical contributions, methodologically, this paper provides a novel approach for combining symbolic approaches and sub-symbolic approaches, in the sense that the features learned from data could be used to build the rules for reasoning. In this paper, the duty satisfaction/violation vectors and the principle are exploited to build the knowledge for reasoning, decision-making and explanation. 

Due to these contributions, some benefits can be obtained. Clearly, formal justification and explanation of the behavior of autonomous systems enhances the transparency of such systems.  Further, autonomous systems that can argue formally for their actions are more likely to engender trust in their users than systems without such a capability. That principle-based systems such as the one detailed in this paper and others (e.g. \cite{VANDERELST201856},\cite{sarathyetal2017coginfocom}) seem to lend themselves readily to explanatory mechanisms adds further support for the adoption of principles as a formalism to ensure the ethical behavior of autonomous systems. 

Concerning future work, first, we have not identified nor formally represented the relation between a principle and a set of cases from which the principle is learned. Doing so is likely to provide further information that explains why an action is chosen in a given situation. Second, in the existing version of VDA \cite{DBLP:journals/pieee/Anderson19}, multi-agent interaction \cite{DBLP:conf/agents/BroersenDHHT01,DBLP:journals/ai/AtkinsonB18,handbooknms} has been considered. The addition of such extensions to the VDA will serve to extend its capabilities. Third, concerning explanations, it could be interesting to further develop our approach by using argument-based dialogues, and the insights from the social sciences, as pointed in \cite{DBLP:journals/ai/Miller19}.

\section*{Acknowledgment}
The authors are grateful to the anonymous reviewers of PRIMA2019 for their helpful comments. All the comments are carefully taken into consideration, and the paper is revised accordingly. 

\bibliographystyle{splncs}
\bibliography{AIEs2019}

\end{document}